\icmltitlerunning{Dynamical Linear Bandits}
\begin{document}

\setlength{\abovedisplayskip}{3pt}
\setlength{\belowdisplayskip}{3pt}
\setlength{\textfloatsep}{10pt}

\twocolumn[
\icmltitle{Dynamical Linear Bandits}

\icmlsetsymbol{equal}{*}

\begin{icmlauthorlist}
\icmlauthor{Marco Mussi}{poli}
\icmlauthor{Alberto Maria Metelli}{poli}
\icmlauthor{Marcello Restelli}{poli}
\end{icmlauthorlist}

\icmlaffiliation{poli}{Politecnico di Milano, Milan, Italy}
\icmlcorrespondingauthor{Marco Mussi}{marco.mussi@polimi.it}

\icmlkeywords{Online Learning, Linear Bandits}

\vskip 0.3in
]

\printAffiliationsAndNotice{}

\allowdisplaybreaks[4]

\begin{abstract}
    In many real-world sequential decision-making problems, an action does not immediately reflect on the feedback and spreads its effects over a long time frame. For instance, in online advertising, investing in a platform produces an instantaneous increase of \emph{awareness}, but the actual reward, \ie a \emph{conversion}, might occur far in the future. Furthermore, whether a conversion takes place depends on: how fast the awareness grows, its vanishing effects, and the synergy or interference with other advertising platforms. Previous work has investigated the Multi-Armed Bandit framework with the possibility of delayed and aggregated feedback, without a particular structure on how an action propagates in the future, disregarding possible dynamical effects. In this paper, we introduce a novel setting, the \settingName (\settingNameShort), an extension of the linear bandits characterized by a hidden state. When an action is performed, the learner observes a noisy reward whose mean is a linear function of the hidden state and of the action. Then, the hidden state evolves according to linear dynamics, affected by the performed action too. We start by introducing the setting, discussing the notion of optimal policy, and deriving an expected regret lower bound. Then, we provide an optimistic regret minimization algorithm, \algname (\algnameshort), that suffers an expected regret of order $\widetilde{\mathcal{O}} \Big( \frac{d \sqrt{T}}{(1-\overline{\rho})^{3/2}} \Big)$, where $\overline{\rho}$ is a measure of the stability of the system, and $d$ is the dimension of the action vector. Finally, we conduct a numerical validation on a synthetic environment and on real-world data to show the effectiveness of \algnameshort in comparison with several baselines.
\end{abstract}

\section{Introduction}\label{sec:intro}
In a large variety of sequential decision-making problems, a learner must choose an action that, when executed, determines an evolution of the underlying system state that is hidden to the learner. In these partially observable problems, the learner observes a reward (\ie feedback) representing the combined effect of multiple actions played in the past. 
For instance, in online advertising campaigns, the process that leads to a \emph{conversion}, \ie \emph{marketing funnel}~\citep{court2009consumer}, is characterized by complex dynamics and comprises several phases. When heterogeneous campaigns/platforms are involved, a profitable budget investment policy has to account for the interplay between campaigns/platforms.
In this scenario, a conversion (\eg a user's purchase of a promoted product) should be attributed not only to the latest ad the user was exposed to, but also to previous ones~\citep{berman2018beyond}. 

The \emph{joint} consideration of each funnel phase is a fundamental step towards an optimal investment solution while considering the advertising campaigns/platforms \emph{independently} leads to sub-optimal solutions. 
Consider, for instance, a simplified version of the funnel with two types of campaigns: \emph{awareness} (\ie impression) ads and \emph{conversion} ads. The first kind of ad aims at improving brand awareness, while the latter aims at creating the actual conversion. 
If we evaluate the performances in terms of conversions only, we will discover that impression ads are not instantaneously effective in creating conversions, so we will be tempted to reduce the budget invested in such a campaign. 
However, this approach is sub-optimal because impression ads increase the chance to convert when a conversion ad is shown after the impression~\citep[\eg][]{hoban2015effects}. In addition, the effect of some ads, especially impression ads delivered via television, may be delayed. 
It has been demonstrated~\citep{olivier2014modeling} that users remember advertising over time in a vanishing way, leading to consequences that non-dynamical models cannot capture. This kind of interplay comprises more general scenarios than the simple reward delay, including the case where the interaction is governed by a dynamics \emph{hidden} to the observer.

While this scenario can be indubitably modeled as a Partially Observable Markov Decision Process~\citep[POMDP,][]{aastrom1965optimal}, the complexity of the framework and its generality are often not required to capture the main features of the problem. Indeed, for specific classes of problems, the Multi-Armed Bandit~\citep[MAB,][]{lattimore2020bandit} literature has explored the possibility of experiencing delayed reward either assuming that the actual reward will be observed, individually, in the future~\citep[\eg][]{JoulaniGS13} or with the more realistic assumption that an aggregated feedback is available~\citep[\eg][]{Pike-Burke0SG18}, with also specific applications to online advertising~\citep{VernadeCP17}. Although effective in dealing with delay effects and the possibility of a reward spread in the future~\citep{Cesa-BianchiGM18}, they do not account for the additional, more complex, dynamical effects, which can be regarded as the evolution of a hidden state.

In this work, we take a different perspective. We propose to model the non-observable dynamical effects underlying the phenomena as a {Linear Time-Invariant} (LTI) system~\citep{hespanha2018linear}. In particular, the system is characterized by a hidden internal state $\xs_t$ (\eg awareness) which evolves via linear dynamics fed by the action $\us_t$ (\eg amount invested) and affected by noise. At each round, the learner experiences a reward $y_t$ (\eg conversions), which is a noisy observation that linearly combines the state $\xs_t$ and the action $\us_t$. Our goal consists in learning an optimal policy so as to maximize the expected cumulative reward. We call this setting \emph{\settingName} (DLBs) that, as we shall see, reduces to linear bandits~\cite{abbasi2011improved} when no dynamics are involved.
Because of the dynamics, the effect of each action persists over time indefinitely but, under stability conditions, it vanishes asymptotically. This allows representing interference and synergy between platforms, thanks to the dynamic nature of the system. 

\textbf{Contributions}~~
In Section~\ref{sec:problemformulation}, we introduce the Dynamical Linear Bandit (\settingNameShort) setting to represent sequential decision-making problems characterized by a hidden state that evolves linearly according to an \emph{unknown} dynamics.
We show that, under stability conditions, the optimal policy corresponds to playing the \emph{constant action} that leads the system to the most profitable steady state. Then, we derive an expected regret lower bound of order $\Omega \Big(\frac{d \sqrt{T}}{(1-\overline{\rho})^{1/2}}\Big)$, being $d$ the dimensionality of the action space and $\overline{\rho} < 1$ the spectral radius of the dynamical matrix of the system evolution law.\footnote{The smaller $\overline{\rho}$, the faster the system reaches its steady state.} In Section~\ref{sec:algorithm}, we propose a novel optimistic regret minimization algorithm, \emph{\algname} (\algnameshort), for the \settingNameShort setting. \algnameshort takes inspiration from \linucb but subdivides the optimization horizon $T$ into increasing-length epochs. In each epoch, an action is selected optimistically and kept constant (i.e., persisted) so that the system approximately reaches the steady state. 
We provide a regret analysis for \algnameshort showing that, under certain assumptions, it enjoys $\widetilde{\mathcal{O}} \Big( \frac{d \sqrt{T}}{(1-\overline{\rho})^{3/2}} \Big)$ expected regret. In Section~\ref{sec:numericalsimulations}, we provide a numerical validation, with both synthetic and real-world data, 
compared with bandit baselines. The proofs of all the results are reported in Appendix~\ref{apx:proofs_all_paper}.

\textbf{Notation}~~Let $a,b \in \Nat$ with $a \le b$, we introduce the symbols: $\llbracket a,b \rrbracket \coloneqq \{a,\dots,b\}$,  $\llbracket b \rrbracket \coloneqq \llbracket 1,b \rrbracket$, and $\llbracket a, \infty \rrparenthesis = \{a,a+1,\dots\}$ . Let $\xs,\ys \in \Reals^n$, we denote with $\inner{\xs}{\ys} = \xs^\transpose \ys = \sum_{j=1}^n x_i y_i$ the inner product. For a positive semidefinite matrix $\As \in \Reals^{n\times n}$, we denote with $\|\xs\|_{\As}^2 = \xs^\transpose \As \xs$ the weighted $2$-norm. The \emph{spectral radius} $\rho(\As)$ is the largest absolute value of the eigenvalues of $\As$, the \emph{spectral norm} $\|\As\|_2$ is the square root of the maximum eigenvalue of $\As^\transpose \As$.
We introduce the maximum spectral norm to spectral radius ratio of the powers of $\As$ defined as $\Phi(\As) = \sup_{\tau \ge 0}\|\As^\tau\|_2/{\rho(\As)^\tau}$~\citep{oymak2O19nonasymptotic}. We denote with $\Is_n$ the identity matrix of order $n$ and with $\mathbf{0}_n$ the vector of all zeros of dimension $n$. A random vector $\xs \in \Reals^n$ is $\sigma^2$-subgaussian, in the sense of~\citet{hsu2012tail}, if for every vector $\bm{\zeta} \in \Reals^{n}$ it holds that $\E\left[\exp\left(\inner{\bm{\zeta}}{\xs}\right)\right] \le \exp (\|\bm{\zeta} \|_2^2 \sigma^2/2 )$.
\section{Setting}\label{sec:problemformulation}
In this section, we introduce the \emph{\settingName} (DLBs), the learner-environment interaction, assumptions, and regret (Section~\ref{sec:setting}). Then, we derive a closed-form expression for the optimal policy for DLBs (Section~\ref{sec:optimalPolicies}). Finally, we derive a lower bound to the regret, highlighting the intrinsic complexities of the DLB setting (Section~\ref{sec:lb}).

\subsection{Problem Formulation}\label{sec:setting}
In a Dynamical Linear Bandit (DLB), the environment is characterized by a \emph{hidden} state, \ie a $n$-dimensional real vector, initialized to $\xs_1 \in \Xs $, where $\Xs\subseteq \Reals^{n}$ is the state space. At each round $t \in \Nat $, the environment is in the hidden state $\xs_t \in \Xs$, the learner chooses an action, \ie a $d$-dimensional real vector $\us_t \in \mathcal{U} $, where $\Us\subseteq \Reals^d$ is the action space. Then, the learner receives a noisy reward $y_t = \inner{\vomega}{\xs_t} + \inner{\vtheta}{\us_t} + \eta_t \in \Ys $, where $\Ys \subseteq \Reals$ is the reward space, $\vomega \in \Reals^{n}$, $\vtheta \in \Reals^{d}$ are unknown parameters, and $\eta_t$ is a zero-mean $\sigma^2$--subgaussian random noise, conditioned to the past. Then, the environment evolves to the new state according to the unknown linear dynamics $\xs_{t+1} = \As \xs_t + \Bs \us_t + \epsilons_t$, where $\As \in \Reals^{n\times n}$ is the dynamic matrix, $\Bs \in \Reals^{n \times d}$ is the action-state matrix, and $\epsilons_t $ is a zero-mean $\sigma^2$--subgaussian random noise, conditioned to the past, independent of $\eta_t$.\footnote{$n$ is the \emph{order} of the LTI system~\citep{kalman1963mathematical}. We make no assumption on the value of $n$ and on its knowledge.} 

\begin{remark}
	The setting proposed above is a particular case of a POMDP~\citep{astrom1965optimal}, in which the state $\xs_t$ is non-observable, while the learner accesses the noisy observation $y_t$ that corresponds to the noisy reward too. Furthermore, the setting can be viewed as a MISO (Multiple Input Single Output) discrete-time LTI system~\citep{kalman1963mathematical}. Finally, the DLB reduces to (non-contextual) linear bandit~\citep{abbasi2011improved} when the hidden state does not affect the reward, \ie when $\vomega = \mathbf{0}$.
\end{remark}

\textbf{Markov Parameters}~~We revise a useful representation, that for every $H \in \dsb{t}$ allows expressing $y_t$ in terms of the sequence of the most recent $H+1$ actions $(\us_s)_{s \in \dsb{t-H,t}}$, reward noise $\eta_t$, $H$ state noises $(\epsilons_s)_{s \in \dsb{t-H,t-1}}$, and starting state $\xs_{t-H}$~\citep{ho1966effective, oymak2O19nonasymptotic,tsiamis2019finite, sarkar2021finite}:
{\thinmuskip=1mu
\medmuskip=1mu
\thickmuskip=1mu
\begin{equation}\label{eq:decompY}
\resizebox{.9\linewidth}{!}{$\displaystyle
	y_t = \underbrace{\sum_{s=0}^{H} \inner{\hs^{\{s\}}}{ \us_{t-s}}}_{\text{action effect}}+ \underbrace{\vomega^\transpose \As^{H}\xs_{t-H}}_{\text{starting state}} + \underbrace{\eta_t + \sum_{s=1}^{H} \vomega^\transpose \As^{s-1}\epsilons_{t-s}}_{\text{noise}},$}
\end{equation}}%
where the sequence of vectors $ \hs^{\{s\}} \in \Reals^{ d}$ for every $s \in \Nat$ are called \emph{Markov parameters} and are defined as: $\hs^{\{0\}} = \vtheta$ and $\hs^{\{s\}} =\Bs^\transpose (\As^{s-1})^\transpose\vomega $ if $s \ge 1$. 
Furthermore, we introduce the \emph{cumulative Markov parameters}, defined for every $s,s' \in \Nat $ with $s \le s'$ as $\hs^{\dsb{s,s'}} = \sum_{l=s}^{s'} \hs^{\{l\}}$ and the corresponding limit as $s' \rightarrow +\infty$, \ie $\hs^{\llbracket s, +\infty \rrparenthesis} = \sum_{l=s}^{+\infty} \hs^{\{l\}} $. Finally, we use the abbreviation $\hs = \hs^{\llbracket 0, +\infty \rrparenthesis} =  \vtheta + \Bs^\transpose (\Is_n - \As)^{-\transpose }\vomega $. 

We will make use of the following standard assumption related to the \emph{stability} of the dynamic matrix $\As$, widely employed in discrete--time LTI literature~\citep{oymak2O19nonasymptotic,lale2020logarithmic,lale2020regret}.

\begin{ass}[Stability]\label{ass:spectralNorm}
The spectral radius of $\As$ is strictly smaller than $1$, \ie $\rho(\As) < 1$, and the maximum spectral norm to spectral radius ratio of the powers of $\As$ is bounded, \ie $\Phi(\As) < +\infty$.\footnote{The latter is a mild assumption: if $\As$ is diagonalizable as $\As = \mathbf{Q} \mathbf{\Lambda} \mathbf{Q}^{-1}$, then $\Phi(\As) \le \| \mathbf{Q}\|_2\| \mathbf{Q}^{-1}\|_2$ and it is finite. In particular, if $\As$ is symmetric then $\Phi(\As) = 1$.}
\end{ass}

\textbf{Policies and Performance}~~
The learner's behavior is modeled via a deterministic \emph{policy} $ \underline{\vpi} = (\vpi_t)_{t \in \Nat}$ defined, for every round $t \in \Nat$, as $\vpi_t : \Hs_{t-1} \rightarrow \Us$, mapping the history of observations $H_{t-1}=(\us_1,y_1,\dots,\us_{t-1},y_{t-1}) \in \Hs_{t-1}$ to an action $\us_t = \vpi_t(H_{t-1}) \in \Us$, where $\Hs_{t-1} = (\Us \times \Ys)^{t-1}$ is the set of histories of length $t-1$. 
The performance of a policy $\bm{\underline{\pi}}$ is evaluated in terms of the \emph{(infinite-horizon) expected average reward}:
\begin{align}\label{eq:eqDyn}
	&\qquad \qquad  J(\underline{\vpi}) \coloneqq  \liminf_{H \rightarrow + \infty} \E \left[\frac{1}{H} \sum_{t=1}^H y_t \right],  \\
	& \text{where} \qquad \begin{cases} \xs_{t+1} = \As \xs_t + \Bs \us_t + \epsilons_t\\
	y_t =  \inner{\vomega}{\xs_t} + \inner{\vtheta}{\us_t} + \eta_t\\
	\us_t = \vpi_t(H_{t-1})
	\end{cases}, \quad \forall t \in \Nat, \nonumber
\end{align}
where the expectation is taken \wrt the randomness of the state noise $\epsilons_t$ and reward noise $\eta_t$. If a policy $\underline{\vpi}$ is \emph{constant}, \ie $\vpi_t(H_{t-1}) = \us$ for every $t \in \Nat$, we abbreviate $J(\us) = J(\underline{\vpi})$. A policy $\underline{\vpi}^*$ is an \emph{optimal policy} if it maximizes the expected average reward, \ie $\underline{\vpi}^* \in \argmax_{\underline{\vpi} } J(\underline{\vpi})$, and its performance is denoted by $J^* \coloneqq J(\underline{\vpi}^*)$.

We further introduce the following assumption that requires the boundedness of the norms of the relevant quantities.

\begin{ass}[Boundedness]\label{ass:boundedness}
There exist $\Theta,\Omega,B,U <+\infty$ s.t.: $\|\vtheta\|_2 \le \Theta$, $\|\vomega\|_2 \le \Omega$, $\| \Bs \|_2 \le B$, $\sup_{\us \in \Us} \|\us\|_2 \le U$, and $\sup_{\xs \in \Xs} \|\xs\|_2 \le X $, $\sup_{\us \in \Us} \, |J(\us)| \le 1$.\footnote{The assumption of the bounded state norm $\|\xs\|_2\le X$ holds whenever the state noise $\epsilons$ is bounded. As shown by~\citet{AgarwalHS19}, this assumption can be relaxed, for unbounded subgaussian noise, by conditioning to the event that none of the noise vectors are ever large at the cost of an additional $\log T $ factor in the regret.}
\end{ass}

\textbf{Regret}~~The \emph{regret} suffered by playing a policy $\underline{\vpi}$, competing against the optimal infinite-horizon policy $\underline{\vpi}^*$ over a \emph{learning horizon} $T \in \Nat$ is given by: 
\begin{equation}
R(\underline{\vpi}, T) \coloneqq T J^* -  \sum_{t=1}^T  y_t,
\end{equation}
where $y_t$ is the sequence of rewards collected by playing $\underline{\vpi}$ as in Equation~\eqref{eq:eqDyn}. The goal of the learner consists in minimizing the \emph{expected regret} $\E{R}(\underline{\vpi}, T)$, where the expectation is taken \wrt the randomness of the reward.

\subsection{Optimal Policy}\label{sec:optimalPolicies}
In this section, we derive a closed-form expression for the optimal policy $\underline{\vpi}^*$ for the infinite--horizon objective function, as introduced in Equation~\eqref{eq:eqDyn}. 
\begin{restatable}[Optimal Policy]{thr}{optimalPolicy}
Under Assumptions~\ref{ass:spectralNorm} and~\ref{ass:boundedness}, an optimal policy $\underline{\vpi}^*$ maximizing the (infinite-horizon) expected average reward $J(\underline{\vpi})$ (Equation~\ref{eq:eqDyn}), for every round $ t \in \Nat$ and history $ H_{t-1} \in \Hs_{t-1}$ is given by:
{\thinmuskip=1mu
\medmuskip=1mu
\thickmuskip=1mu
\begin{align}\label{eq:markovDec}
	\vpi_{t}^*(H_{t-1}) = \us^* \;\;\;\;\;\;\;\;	\text{  where  } \;\;\;\;\;\;\;\; \us^* \in \argmax_{\us \in \Us}  J(\us) = \inner{\hs}{\us}.
\end{align}}
\end{restatable}
Some remarks are in order. The optimal policy plays the \emph{constant} action $\us^*\in \Us$ which brings the system in the \quotes{most profitable} steady-state.\footnote{In Appendix~\ref{apx:finiteHorizon}, we show that the optimal policy is non--stationary for the finite--horizon case.} Indeed, the expression $\inner{\hs}{\us}$ can be rewritten expanding the cumulative Markov parameter as $(\vtheta^\transpose + \vomega^\transpose(\Is_n - \As )^{-1}\Bs) \us^*$ and $\overline{\xs}^*=(\Is_n - \As )^{-1}\Bs \us^*$ is the expression of the steady state $\overline{\xs}^* = \As \overline{\xs}^* + \Bs \us^*$, when applying action $\us^*$. It is worth noting the role of Assumption~\ref{ass:spectralNorm} which guarantees the existence of the inverse $(\Is_n - \As )^{-1}$. In this sense, our problem shares the constant nature of the optimal policy with the linear bandit setting~\citep{abbasi2011improved}, although ours is characterized by an evolving state, which introduces a new trade-off in the action selection. From the LTI system perspective, this implies that we can restrict to \emph{open-loop stationary} policies. The reason why DLBs do not benefit from \emph{closed-loop} policies, differently from other classical problems, such as the LQG~\citep{abbasi2011regret}, lies in the linearity of the reward $y_t$ and in the additive noise $\eta_t$ and $\epsilons_t$, making their presence irrelevant (in expectation) for control purposes. 
Nonetheless, as we shall see, our problem poses additional challenges compared to linear bandits since, in order to assess the quality of an action $\us \in \Us$, instantaneous rewards are not reliable, and we need to let the system evolve to the steady state and, only then, observe the reward.

\subsection{Regret Lower Bound}\label{sec:lb}
In this section, we provide a lower bound to the expected regret that any learning algorithm suffers when addressing the learning problem in a DLB.

\begin{restatable}[Lower Bound]{thr}{lb}\label{thr:lb}
For any policy $\underline{\bm{\pi}}$ (even stochastic), there exists a DLB fulfilling Assumptions~\ref{ass:spectralNorm} and~\ref{ass:boundedness}, such that for sufficiently large $T \ge \mathcal{O} \Big( \frac{d^2}{1-\rho(\As)} \Big)$, policy $\underline{\bm{\pi}}$ suffers an expected regret lower bounded by:
\begin{align*}
	\mathbb{E} R(\underline{\bm{\pi}}, T) \ge \Omega \left( \frac{d\sqrt{T}}{(1-\rho(\As))^{\frac{1}{2}}}  \right).
\end{align*}
\end{restatable}

The lower bound highlights the main challenges of the DLB learning problem. First of all, we observe a dependence on $1/(1-\rho(\As))$, being $\rho(\As)$ the spectral radius of the matrix $\As$. This is in line with the intuition that, as $\rho(\As)$ approaches $1$, the problem becomes more challenging. Furthermore, we note that when $\rho(\As)=0$, \ie the problem has no dynamical effects, the lower bound matches the one of linear bandits~\citep{lattimore2020bandit}.
It is worth noting that, for technical reasons, the result of Theorem~\ref{thr:lb} is derived under the assumption that, at every round $t \in \dsb{T}$, the agent observes \emph{both} the state $\xs_t$ and the reward $y_t$ (see Appendix~\ref{apx:proofs_all_paper}). Clearly, this represents a simpler setting \wrt DLBs (in which $\xs_t$ is hidden) and, consequently, Theorem~\ref{thr:lb} is a viable lower bound for DLBs too.
\section{Algorithm}\label{sec:algorithm}
In this section, we present an \emph{optimistic} regret minimization algorithm for the \settingNameShort setting. \emph{\algname} (\algnameshort), whose pseudocode is reported in Algorithm~\ref{alg:alg}, requires the knowledge of an upper-bound $\overline{\rho} < 1$ on the spectral radius of the dynamic matrix $\As$ (\ie $\rho(\As) \le \overline{\rho}$) and on the  maximum spectral norm to spectral radius ratio  $\overline{\Phi} < +\infty$ (\ie $\Phi(\As) \le \overline{\Phi}$), as well as the bounds on the relevant quantities of Assumption~\ref{ass:boundedness}.\footnote{As an alternative, one can consider a more demanding requirement of the knowledge of a bound on the spectral norm $\|\As\|_2$ of $\As$. Similar assumptions regarding the knowledge of analogous quantities are considered in the literature, \eg \emph{decay of Markov operator norms}~\citep{simchowitz2020improper} and \emph{strong stability}~\citep{plevrakis2020geometric}, spectral norm bound~\citep{lale2020logarithmic}. As a side note, the knowledge of  $\overline{\rho} \ge \rho(\As)$ (or an equivalent quantity) is proved to be unavoidable by Theorem~\ref{thr:lb}. Indeed, if no restriction on $\rho(\As)$ is enforced (\ie just $\rho(\As) < 1$), one can always consider the DLB in which $\rho(\As) = 1-1/T < 1$ making the regret lower bound degenerate to linear.}
\algnameshort is based on the following simple observation. To assess the quality of action $\us \in \Us$, we need to \emph{persist} in applying it so that the system approximately reaches the corresponding steady state and, then, observe the reward $y_t$, representing a reliable estimate of $J(\us) = \inner{\hs}{\us}$. We shall show that, under Assumption~\ref{ass:spectralNorm}, the number of rounds needed to approximately reach such a steady state is logarithmic in the learning horizon $T$ and depends on the upper bound of the spectral norm $\overline{\rho}$. 
After initializing the Gram matrix $\Vs_0 = \lambda \Is_d$ and the vectors $\bs_0$ and $\widehat{\hs}_0$ both to $\mathbf{0}_d$ (line~\ref{line:init}), \algnameshort subdivides the learning horizon $T$ in $M \leq T$ \emph{epochs}. Each epoch $m \in \dsb{M}$ is composed of $H_m+1$ rounds, where $H_m =  \lfloor \log m / \log (1/\overline{\rho}) \rfloor$ is logarithmic in the epoch index $m$. At the beginning of each epoch, $m \in \dsb{M}$, \algnameshort computes the upper confidence bound (UCB) index (line~\ref{line:ucb}) defined for every $\us \in \Us$ as:
\begin{align}\label{eq:ucbEq}
	\text{UCB}_t(\us) \coloneqq \, \inner{\widehat{\hs}_{t-1}}{\us} + \beta_{t-1} \left\| \us \right\|_{\Vs_{t-1}^{-1}},
\end{align}
where $\widehat{\hs}_{t-1} = \Vs_{t-1}^{-1}\bs_{t-1}$ is the Ridge regression estimator of the cumulative Markov parameter $\hs$, as in Equation~\eqref{eq:markovDec} and $\beta_{t-1} \ge 0$ is an exploration coefficient to be defined later. Similar to \linucb~\citep{abbasi2011improved}, the index $\text{UCB}_t(\us)$ is designed to be optimistic, \ie $J(\us) \le \text{UCB}_t(\us)$ in high-probability for all $\us \in \Us$. Then, the optimistic action $\us_t \in \argmax_{\us \in \Us} \text{UCB}_t(\us)$ is executed (line~\ref{line:play}) and persisted for the next $H_m$ rounds (lines~\ref{line:for}-\ref{line:persist}). The length of the epoch $H_m$ is selected such that, under Assumption~\ref{ass:spectralNorm}, the system has approximately reached the steady state after $H_m+1$ rounds. In this way, at the end of epoch $m$, the reward $y_{t}$ is an almost-unbiased sample of the steady-state performance $J(\us_t)$. This sample is employed to update the Gram matrix estimate $\Vs_t$ and the vector $\bs_t$ (line~\ref{line:update}), while the samples collected in the previous $H_m$ rounds are discarded (line~\ref{line:discard}).
It is worth noting that by setting $H_m=0$ for all $m \in \dsb{M}$, \algnameshort reduces to \linucb. The following sections provide the concentration of the estimator $\widehat{\hs}_{t-1}$ of $\hs$ (Section~\ref{sec:selfNorm}) and the regret analysis of \algnameshort (Section~\ref{sec:regret}).

\RestyleAlgo{ruled}
\LinesNumbered
\begin{algorithm}[t]
\caption{\algnameshort.}\label{alg:alg}
\small
\SetKwInOut{Input}{Input}
\Input{Regularization parameter $\lambda > 0$, exploration coefficients $(\beta_{t-1})_{t \in \dsb{T}}$, spectral radius upper bound $ 0 \le \overline{\rho} < 1$}

Initialize $t \leftarrow 1$, $\Vs_0 = \lambda \Is_{d}$, $\bs_0 = \mathbf{0}_{d}$, $\widehat{\hs}_0 = \mathbf{0}_{d}$, \label{line:init}

{\thinmuskip=2mu
\medmuskip=2mu
\thickmuskip=2mu
Define  $M = \min\{M' \in \Nat : \sum_{m=1}^{M'} 1 + \lfloor \frac{\log m}{ \log (1/\overline{\rho})} \rfloor > T \} - 1$ \label{line:define_M}}

\For{$m \in \dsb{M}$}{
Compute $\us_{t} \in \argmax_{\us \in \Us} \text{UCB}_t(\us)$ \label{line:ucb}

{\thinmuskip=2mu
\medmuskip=2mu
\thickmuskip=2mu
$\qquad$ where $\text{UCB}_t(\us) \coloneqq \inner{\widehat{\hs}_{t-1}}{\us} + \beta_{t-1} \left\| \us \right\|_{\Vs_{t-1}^{-1}}$
}

Play arm $\us_t$ and observe $y_{t}$\label{line:play}

Define $H_m = \lfloor \frac{\log m}{\log (1/\overline{\rho})} \rfloor$

\For{$j \in \dsb{H_m}$}{ \label{line:for}

Update $\text{ } \Vs_{t} = \Vs_{t-1}, \; \bs_{t} = \bs_{t-1}$\label{line:discard}

$t \leftarrow t+1$

Play arm $\us_t = \us_{t-1}$ and observe $y_{t}$\label{line:persist}
}

Update $\text{ } \Vs_{t} = \Vs_{t-1} + \us_{t}\us_{t}^\transpose, \; \bs_{t} = \bs_{t-1} + \us_{t}y_{t}$\label{line:update}

Compute $\widehat{\hs}_t = \Vs_{t}^{-1} \bs_{t}$

$t \leftarrow t+1$

}
\end{algorithm}

\subsection{Self-Normalized Concentration Inequality for the Cumulative Markov Parameter}\label{sec:selfNorm}
In this section, we provide a self-normalized concentration result for the estimate $\widehat{\hs}_t$ of the cumulative Markov parameter $\hs$. For every epoch $m \in \dsb{M}$, we denote with $t_m$ the last round of epoch $m$: $t_0 = 0$ and  $ t_{m} = t_{m-1} + 1 + H_m$. At the end of each epoch $m$, we solve the Ridge regression problem, defined for every round $ t \in  \dsb{T}$ as:
{\thinmuskip=1mu
\medmuskip=1mu
\thickmuskip=1mu
\begin{align*}
    \widehat{\hs}_{t}  = \argmin_{\widetilde{\hs} \in \Reals^{d}} \sum_{l \in \dsb{M} : t_l \le t_m} (y_{t_l} - \inner{\widetilde{\hs}}{\us_{t_l}} )^2 + \lambda \big\| \widetilde{\hs} \big\|_2^2 = \Vs_t^{-1} \bs_t.
\end{align*} }

We now present the following self-normalized maximal concentration inequality and, then, we compare it with the existing results in the literature. 

\begin{restatable}[Self-Normalized Concentration]{thr}{concentration}\label{thr:concentration}
Let $(\widehat{\hs}_t)_{t \in \Nat}$ be the sequence of solutions of the Ridge regression problems of Algorithm~\ref{alg:alg}. Then, under Assumption~\ref{ass:spectralNorm} and~\ref{ass:boundedness}, for every $\lambda \ge 0$ and $\delta \in (0,1)$, with probability at least $1-\delta$, simultaneously for all rounds $t \in \Nat$, it holds that:
\begin{align*}
	\left\| \widehat{\hs}_t - \hs \right\|_{\Vs_{t}} & \le   \frac{c_1}{\sqrt{\lambda}} \log(e(t+1)) + c_2 \sqrt{\lambda}  \\
	& \quad + \sqrt{2  \widetilde{\sigma}^2 \left( \log \left(\frac{1}{\delta} \right) + \frac{1}{2} \log \left( \frac{\det\left({\Vs}_t\right)}{\lambda^d} \right) \right)},
\end{align*}
where $c_1 = U \Omega \Phi(\As) \left(\frac{U B }{1-\rho(\As)} + X \right)$, $c_2 = \Theta + \frac{\Omega B \Phi(\As)}{1-\rho(\As)}$, and $\widetilde{\sigma}^2 =  \sigma^2 \left( 1 + \frac{\Omega^2 \Phi(\As)^2}{1-\rho(\As)^2}\right)$.
\end{restatable}

First, we note that when $\Omega = 0$ ($\vomega = \mathbf{0}_n$), \ie the state does not affect the reward, the bound perfectly reduces to the self-normalized concentration used in linear bandits~\citep[][Theorem 1]{abbasi2011improved}. In particular, we recognize the second term due to the regularization parameter $\lambda > 0$ and the third one, which involves the subgaussianity parameter $\widetilde{\sigma}^2$, related to the joint contribution of the state and reward noises. Furthermore, the first term is an additional bias that derives from the epochs of length $H_m+1$. The choice of the value $H_m$ represents one of the main technical novelties that, on the one hand, leads to a bias that conveniently grows logarithmically with $t$ and, on the other hand, can be computed without the knowledge of $T$.

It is worth looking at our result from the perspective of learning the LTI system parameters. We can compare our Theorem~\ref{thr:concentration} with the concentration presented in~\citep[][Appendix C]{lale2020logarithmic}, which represents, to the best of our knowledge, the only result for the closed-loop identification of LTI systems with non-observable states. First, note that, although we focus on a MISO system ($y_t$ is a scalar, being our reward), extending our estimator to multiple-outputs (MIMO) is straightforward. Second, the approach of~\citep{lale2020logarithmic} employs the \emph{predictive form} of the LTI system to cope with the correlation introduced by closed-loop control. This choice allows for convenient analysis of the estimated Markov parameters of the predictive form. However, recovering the parameters of the original system requires an application of the Ho-Kalman method~\citep{ho1966effective} which, unfortunately, does not preserve the concentration properties in general, but only for \emph{persistently exciting} actions. Our method, instead, forces to play an open-loop policy within a single epoch (each with logarithmic duration), while the overall behavior is closed-loop, as the next action depends on the previous-epoch estimates. In this way, we are able to provide a concentration guarantee on the parameters of the original system without assuming additional properties on the action signal.

\subsection{Regret Analysis}\label{sec:regret}
In this section, we provide the analysis of the regret of \algnameshort, when we select the exploration coefficient $\beta_t$ based on the knowledge of the upper bounds $\overline{\rho} < 1$, $\overline{\Phi} < +\infty$, and those specified in Assumption~\ref{ass:boundedness}, defined for every round $t \in \dsb{T}$ as:
\begin{equation}\label{eq:beta}
\begin{aligned}
\beta_t & \coloneqq  \frac{\overline{c}_1}{\sqrt{\lambda}} \log(e(t+1)) + \overline{c}_2 \sqrt{\lambda} \\
	& \quad + \sqrt{2 \overline{\sigma}^2 \left( \log \left(\frac{1}{\delta} \right) + \frac{d}{2} \log \left(1 + \frac{t U^2}{d \lambda} \right) \right)}, \nonumber
\end{aligned}
\end{equation}
where $\overline{c}_1 = U \Omega \overline{\Phi} \left(\frac{U B }{1-\overline{\rho}} + X \right)$, $\overline{c}_2 =  \Theta + \frac{\Omega B \overline{\Phi}}{1-\overline{\rho}}$, and $\overline{\sigma}^2 = \sigma^2 \left( 1 + \frac{\Omega^2 \overline{\Phi}^2}{1-\overline{\rho}^2} \right)$. The following result provides the bound on the expected regret of \algnameshort. 
\begin{restatable}[Upper Bound]{thr}{regretThr}\label{thr:regretThr}
    Under Assumptions~\ref{ass:spectralNorm} and~\ref{ass:boundedness}, selecting $\beta_t$ as in Equation~\eqref{eq:beta} and $\delta=1/T$, \algnameshort suffers an expected regret bounded as (highlighting the dependencies on $T$, $\overline{\rho}$, $d$, and $\sigma$ only):
  \begin{align*}
        \E R&(\bm{\underline{\pi}}^{\text{\algnameshort}},T) \le {\mathcal{O}} \Bigg(  \frac{d \sigma \sqrt{T} (\log T)^{\frac{3}{2}}}{1-\overline{\rho}} \\
        & \quad\qquad + \frac{\sqrt{d T} (\log T)^2}{(1-\overline{\rho})^{\frac{3}{2}}} + \frac{1}{(1-\rho(\As))^2}\Bigg).
    \end{align*}
\end{restatable}

\begin{proofsketch}{\thinmuskip=2mu
\medmuskip=2mu
\thickmuskip=2mu
The analysis of \algnameshort poses additional challenges compared to that of \linucb~\cite{abbasi2011improved} because of the dynamic effects of the hidden state. 
The idea behind the proof is to first derive a bound on a different notion of regret, \ie the \emph{offline regret}: $R^{\text{off}}(\bm{\underline{\pi}},T) =  T J^* - \sum_{t=1}^T J(\us_t)$, that compares $J^*$ with the steady-state performance $J(\us_t)$ of the action $\us_t = \bm{\pi}_t(H_{t-1})$ (Theorem~\ref{thr:regretOff}). This analysis of $R^{\text{off}}(\bm{\underline{\pi}},T)$ can be comfortably carried out, by adopting a proof strategy similar to that of $\texttt{Lin-UCB}$. However, when applying action $\us_t$, the DLB does not immediately reach the performance $J(\us_t)$ as the expected reward $\E[y_t]$ experiences a transitional phase before converging to the steady state. Under stability (Assumption~\ref{ass:spectralNorm}), it is possible to show that the expected offline regret and the expected regret differ by a constant: $|\E R(\bm{\underline{\pi}},T) - \E R^{\text{off}}(\bm{\underline{\pi}},T)| \le \mathcal{O} (1/(1-\rho(\As))^2)$ (Lemma~\ref{lemma:offOnRel}).}
\end{proofsketch}

Some observations are in order. We first note a dependence on the term $1/(1-\overline{\rho})$, which, in turn, depends on the upper bound $\overline{\rho}$ of the spectral gap $\rho(\As)$. If the system does not display a dynamics, \ie we can set $\overline{\rho}=0$, we obtain a regret bound that, apart from logarithmic terms, coincides with that of \linucb, \ie $\widetilde{\mathcal{O}}(d \sigma \sqrt{T})$. Instead, for slow-converging systems, \ie $\overline{\rho} \approx 1$, the regret bound enlarges, as expected. Clearly, a value of $\overline{\rho}$ too large compared to the optimization horizon $T$ (\eg $\overline{\rho} = 1-1/T^{1/3}$) makes the regret bound degenerate to linear. This is a case in which the underlying system is so slow that the whole horizon $T$ is insufficient to approximately reach the steady state. Third, the regret bound is the sum of three components: the first one depends on the subgaussian proxy $\sigma$ and is due to the noisy estimation of the relevant quantities; the second one is a bias due to the epoch-based structure of \algnameshort; finally, the third one is constant (does not depend on $T$) accounts for the time needed to reach the steady state.

\begin{remark}[Regret upper bound (Theorem~\ref{thr:regretThr}) and lower bound (Theorem~\ref{thr:lb}) Comparison]
Apart from logarithmic terms, we notice a tight dependence on $d$ and on $T$. Instead, concerning the spectral properties of $\As$, in the upper bound, we experience a dependence on $1/(1-\overline{\rho})$ raised to a higher power (either $1$ for the term multiplied by $d$ and $3/2$ for the term multiplied by $\sqrt{d}$) \wrt the exponent appearing in the lower bound (\ie $1/2$). It is currently an open question whether the lower bound is not tight (which is obtained for a simpler setting in which the state is observable $\xs_t$) or whether more efficient algorithms for DLBs can be designed. Furthermore, Theorem~\ref{thr:regretThr} highlights the impact of the upper bound $\overline{\rho}$ compared with the true $\rho(\As)$.
\end{remark}
\section{Related Works}\label{sec:relatedworks}
In this section, we survey and compare the literature with a particular focus on bandits with delayed, aggregated, and composite feedback~\citep{JoulaniGS13} and online control for Linear Time-Invariant (LTI) systems~\citep{hespanha2018linear}. Additional related works are reported in Appendix~\ref{apx:relWorks}.

\textbf{Bandits with Delayed/Aggregated/Composite Feedback}~~ The Multi-Armed Bandit setting has been widely employed as a principled approach to address sequential decision-making problems~\citep{lattimore2020bandit}. The possibility of experiencing delayed rewards has been introduced by~\citet{JoulaniGS13} and widely exploited in advertising applications~\citep{olivier2014modeling, VernadeCP17}. A large number of approaches have extended this setting either considering stochastic delays~\citep{VernadeCLZEB20}, unknown delays~\citep{LiCG19, LancewickiSKM21}, arm-dependent delays~\citep{ManegueuVCV20}, non-stochastic delays~\citep{ItoHSTFKK20, ThuneCS19, abs-2201-13172}. Some methods relaxed the assumption that the individual reward is revealed after the delay expires, admitting the possibility of receiving anonymous feedback, which can be aggregated~\citep{Pike-Burke0SG18, abs-2112-13029} or composite~\citep{Cesa-BianchiGM18, abs-1910-01161, WangWH21}. Most of these approaches are able to achieve $\widetilde{\mathcal{O}}(\sqrt{T})$ regret, plus additional terms depending on the extent of the delay. In our DLBs, the reward is generated over time as a combined effect of past and present actions through a \emph{hidden state}, while these approaches generate the reward instantaneously and reveal it (individually or in aggregate) to the learner in the future and no underlying state dynamics is present.

\textbf{Online Control of Linear Time-Invariant Systems}~~The particular structure imposed by linear dynamics makes our approach comparable to LTI online control for partially observable systems~\citep[\eg][]{lale2020regret, simchowitz2020improper, plevrakis2020geometric}. While the dynamical model is similar, in online control of LTI systems, the perspective is quite different. Most of the works either consider the Linear Quadratic Regulator~\citep{mania2019certainty, lale2020regret} or (strongly) convex objective functions~\citep{mania2019certainty, simchowitz2020improper, lale2020logarithmic}, achieving, in most of the cases $\widetilde{\mathcal{O}} (\sqrt{T})$ regret for strongly convex functions and $\widetilde{\mathcal{O}} (T^{2/3})$ for convex functions. Recently,  $\widetilde{\mathcal{O}} (\sqrt{T})$ regret rate has been obtained for convex function too, by means of geometric exploration methods~\citep{plevrakis2020geometric}. Compared to \algnameshort, the algorithm of ~\citet{plevrakis2020geometric} considers general convex costs but assumes the observability of the state and limits to the class of disturbance response controllers~\citep{li1993robust} that do not include the constant policy. Moreover, the regret bound of~\citet{plevrakis2020geometric} differs from Theorem~\ref{thr:regretThr}, as it shows a cubic dependence on the system order\footnote{This holds for \emph{known} cost functions. Instead, for \emph{unknown} costs, the exponent becomes $24$~\citep{plevrakis2020geometric}.} and an implicit non-trivial dependence on the dynamic matrix $\As$. Instead, our Theorem~\ref{thr:regretThr} is remarkably independent of the system order $n$. Furthermore, \citet{lale2020logarithmic} reach $ \mathcal{O} (\log(T))$ regret in the case of strongly convex cost functions competing against the best \emph{persistently exciting} controller (\ie a controller implicitly maintaining a non-null exploration). Some approaches are designed to deal with adversarial noise~\citep{simchowitz2020improper}. All of these solutions, however, look for the best closed-loop controller within a specific class, \eg disturbance response control~\citep{li1993robust}. These controllers, however, do not allow us to easily incorporate constraints on the action space, which could be of crucial importance in practice, \eg in advertising domains. \algnameshort works with an arbitrary action space and, thanks to the linearity of the reward, does not require complex closed-loop controllers.
\section{Numerical Simulations}\label{sec:numericalsimulations}

In this section, we provide numerical validations of \algnameshort in both a synthetic scenario and a domain obtained from real-world data. The goal of these simulations is to highlight the behavior of \algnameshort in comparison with bandit baselines, describing advantages and disadvantages. 
The first experiment is a synthetic setting in which we can evaluate the performances of all the solutions and the sensitivity of \algnameshort w.r.t.~the $\overline{\rho}$ parameter (Section~\ref{subsec:exp_synt}). Then, we show a comparison in a DLB scenario retrieved from real-world data (Section~\ref{sec:realWorldExp}). 
The code of the experiments can be found at \url{https://github.com/marcomussi/DLB}. 
Details and additional experiments can be found in Appendix~\ref{apx:exper}.

\textbf{Baselines}~~We consider as main baseline \linucb \citep{abbasi2011improved}, designed for linear bandits. 
We include \expthree~\citep{auer1995gambling} usually employed in (non-adaptive) adversarial settings, and its extension to $k$-length memory (adaptive) adversaries \batchexpthree by~\citet{dekel2012online}.\footnote{$k$ is proportional to $\lfloor \log M / \log (1/\overline{\rho}) \rfloor$. In Appendix~\ref{apx:advApx} we elaborate on the use of adversarial bandit algorithms for DLBs.}
Additionally, we perform a comparison with algorithms for regret minimization in non-stationary environments: \dlinucb~\citep{RussacVC19}, an extension of \linucb for non-stationary settings, and \artwo~\citep{chen2021dynamic}, a bandit algorithm for processes presenting temporal structure. Lastly, in the case of real-world data, we compare our solution with a human-expert policy (\manualexpert). 
This policy is directly generalized from the original dataset by learning via regression the average budget allocation over all platforms from the available data.

For the baselines which do not support vectorial actions, we perform a discretization of the action space $\Us$ that surely contains optimal action. 
Concerning the hyperparameters of the baselines, whenever possible, they are selected as in the respective original papers. 
The experiments are presented with a regularization parameter $\lambda \in\{ 1,\log T$\} for the algorithms which require it (\ie \algnameshort, \linucb, and \dlinucb).\footnote{For \algnameshort, $\log T$ is a nearly optimal choice for $\lambda$ as it can be seen by looking at the first two addenda of the exploration factor in Equation~\eqref{eq:beta}.} 
Further information about the hyperparameters of the baselines and the adopted optimistic exploration bounds are presented in Appendix~\ref{apx:notes_on_baselines}.

\subsection{Synthetic Data}
\label{subsec:exp_synt}

\textbf{Setting}~~We consider a DLB defined by the following matrices $\As = \mathrm{diag}((0.2, 0, 0.1))$, $\Bs = \mathrm{diag}((0.25,0,0.1))$, $\vtheta = (0, 0.5, 0.1)^\transpose$, $\vomega=(1,0,0.1)^\transpose$ and a Gaussian noise with $\sigma = 0.01$ (diagonal covariance matrix for the state noise).\footnote{It is worth noting that the decision of using diagonal matrices is just for explanation purposes and w.l.o.g. (at least in the class of diagonalizable dynamic matrices). 
Indeed, we are just interested in the cumulative Markov parameter $\hs$ and we could have obtained the same results with an equivalent (non-diagonal) representation, by applying an inevitable transformation $\mathbf{T}$ as $\As' = \mathbf{T}\As\mathbf{T}^{-1}$, $\vomega' =\mathbf{T}^{-\transpose} \vomega $, and $\Bs' =\mathbf{T}\Bs $.} 
This way, the spectral gap of the dynamical matrix is $\rho(\As) = 0.2$ and $\Phi(\As) = 1$. Moreover, the cumulative Markov parameter is given by $\hs = (0.56, 0.5, 0.11)^\transpose$. 
We consider the action space $\Us = \{(u_1,u_2,u_3)^\transpose \in [0,1]^3 \text{ with } u_1+u_2+u_3 \le 1.5\}$ that simulates a total budget of $1.5$ to be allocated to the three platforms. Thus, a \quotes{myopic} agent would simply look at how the action immediately propagates to the reward through $\vtheta$, and will invest the budget in the second component of the action, which is weighted by $0.5$. 
Instead, a \quotes{far-sighted} agent, aware of the system evolution, will look at the cumulative Markov parameter $\hs$, realizing that the most convenient action is investing in the first component, weighted by $0.56$. Therefore, the optimal action is $\us^* = (1, 0.5, 0)^\transpose$ leading to $J^* = 0.81$.

\begin{figure*}[t!]
\minipage{0.3\textwidth}
\resizebox{\linewidth}{!}{\includegraphics[]{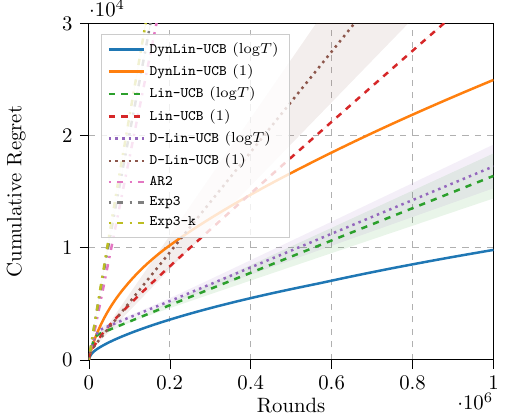}}
\caption{Cumulative regret as a function of the rounds comparing \algnameshort and the other bandit baselines (50 runs, mean $\pm$ std).\\}\label{fig:cum_regert}
\endminipage\hfill
\minipage{0.3\textwidth}
\resizebox{\linewidth}{!}{\includegraphics[]{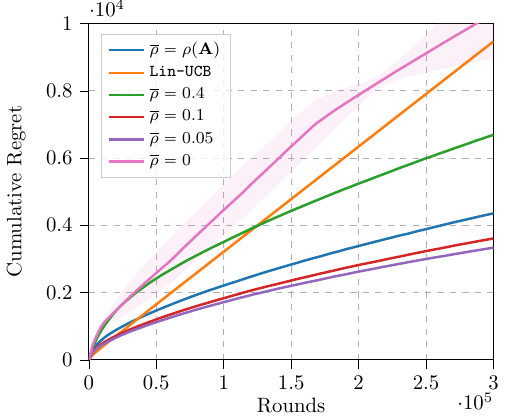}}
\caption{Cumulative regret as a function of the rounds comparing \linucb, and \algnameshort with $\lambda = \log T$, varying the upper bound on the spectral radius $\overline{\rho}$ (50 runs, mean $\pm$ std).}\label{fig:sensitivity}
\endminipage\hfill
\minipage{0.3\textwidth}
\resizebox{\linewidth}{!}{\includegraphics[]{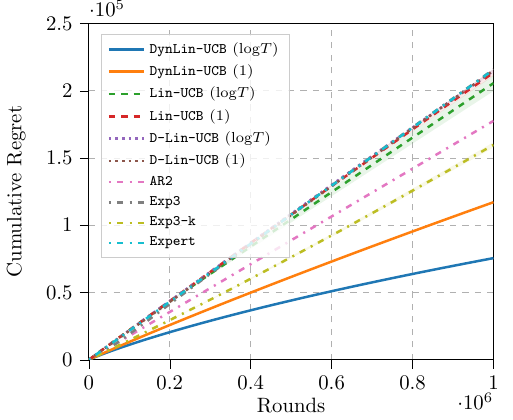}} \caption{Cumulative regret for \algnameshort, the other bandit baselines and the \manualexpert in the system generalized from real-world data (50 runs, mean $\pm$ std).}\label{fig:regret_realdata}
\endminipage
\end{figure*}

\textbf{Comparison with the bandit baselines}~~Figure~\ref{fig:cum_regert} shows the performance in terms of cumulative regret of \algnameshort, \linucb, \dlinucb, \artwo, \expthree, and \batchexpthree. The experiments are conducted over a time horizon of $1$ million rounds. For \algnameshort, we employed, for the sake of this experiment, the true value of the spectral gap, \ie $\overline{\rho} = \rho(\As)=0.2$.
First of all, we observe that both \expthree and \batchexpthree suffers a significantly large cumulative regret. Similar behavior is displayed by \artwo. 
Moreover, all the versions of \linucb and \dlinucb suffer linear regret. The best performance of \dlinucb is obtained when the discount factor $\gamma$ is close to $1$ (the weights take the form $w_t = \gamma^{-t}$), and the behavior is comparable with the one of \linucb. Even for a quite fast system ($\rho(\As) = 0.2$), ignoring the system dynamics, and the presence of the hidden state, has made both \linucb and \dlinucb commit (in their best version, with $\lambda = \log T$) to the sub-optimal (myopic) action $\us^{\circ} = (0.5,1,0)^\transpose$ with performance $J^{\circ}=0.78 < J^*$, with also a relevant variance. 
On the other hand, \algnameshort is able to maintain a smaller and stable (variance is negligible) sublinear regret in both its versions, with a notable advantage when using $\lambda = \log T$. 

\textbf{Sensitivity to the Choice of $\overline{\rho}$}~~The upper bound $\overline{\rho}$ of the spectral radius $\rho(\As)=0.2$ represents a crucial parameter of \algnameshort. While an overestimation $\overline{\rho} \gg \rho(\As)$ does not compromise the regret rate but tends to slow down the convergence process, a severe underestimation $\overline{\rho} \ll \rho(\As)$ might prevent learning at all. In Figure~\ref{fig:sensitivity}, we test \algnameshort against a misspecification of $\overline{\rho}$, when $\lambda = \log T$. We can see that by considering $\overline{\rho} = 2 \rho(\As)$, \algnameshort experiences a larger regret but still sublinear and smaller w.r.t. \linucb with $\lambda = \log T$. Even by reducing $\overline{\rho} \in \{0.1,0.05\}$, \algnameshort is able to keep the regret sublinear, showing remarkable robustness to misspecification. Clearly, setting $\overline{\rho}=0$ makes the regret almost degenerate to linear.

\subsection{Real-world Data}\label{sec:realWorldExp}
We present an experimental evaluation based on real-world data coming from three web advertising platforms (\texttt{Facebook}, \texttt{Google}, and \texttt{Bing}), related to several campaigns for an invested budget of $5$ Million EUR over $2$ years. Starting from such data, we learn the best DLB model by means of a specifically designed variant of the Ho-Kalman algorithm~\citep{ho1966effective}.\footnote{See Appendix~\ref{apx:systemidenfication}.} We used the learned model to build up a simulator. The resulting system has $\rho (\As) = 0.67$. We evaluate \algnameshort against the baselines for $T=10^{6}$ steps over $50$ runs.

\textbf{Results}~~Figure~\ref{fig:regret_realdata} shows the results in terms of cumulative regret. It is worth noting that no algorithm, except for \algnameshort, is able to converge to the optimal choice. Indeed, they immediately commit to a sub-optimal solution. \algnameshort, instead, shows a convergence trend towards the optimal policy over time for both $\lambda=1$ and $\lambda=\log T$, even if the best-performing version is the one which employs $\lambda=\log T$. The \manualexpert, which has a preference towards maximizing the instantaneous effect of the actions only and does not take into account correlations between platforms, displays a sub-optimal performance.

\section{Discussion and Conclusions}
\label{sec:conclusions}
In this paper, we have introduced the \settingName (DLBs), a novel model to represent sequential decision-making problems in which the system is characterized by a non-observable hidden state that evolves according to linear dynamics and by an observable noisy reward that linearly combines the hidden state and the action played. This model accounts for scenarios that cannot be easily represented by existing bandit models that consider delayed and aggregated feedback. We have derived a regret lower bound that highlights the main complexities of the DLB problem. Then, we have proposed a novel optimistic regret minimization approach, \algnameshort, that, under stability assumption, is able to achieve sub-linear regret. The numerical simulation in both synthetic and real-world domains succeeded in showing that, in a setting where the baselines mostly suffer linear regret, our algorithm consistently enjoys sublinear regret. Furthermore, \algnameshort proved to be robust to misspecification of its most relevant hyper-parameter $\overline{\rho}$. To the best of our knowledge, this is the first work addressing this family of problems, characterized by hidden linear dynamics, with a simple, yet effective, bandit-like approach. 
Short-term future directions include efforts in closing the gap between the regret lower and upper bounds.
Long-term future directions should focus on extending the present approach to non-linear system dynamics and embedding in the algorithm additional budget constraints enforced over the optimization horizon.

\section*{Acknowledgements}
This paper is supported by PNRR-PE-AI FAIR project funded by the NextGeneration EU program.

\bibliography{biblio}
\bibliographystyle{icml2023}

\newpage
\appendix
\onecolumn
\section{Additional Related Works}\label{apx:relWorks}
In this appendix, we report additional details about the related works.

\subsection{Delayed/Aggregated Feedback with DLBs}\label{apx:delComp}
In this appendix, we show how we can model \emph{delayed} and \emph{composite} feedback with DLBs. For the delayed feedback, we focus on the case in which either the delay is fixed to the value $\tau \ge 1$, \ie the reward of the pull performed at round $t$ is experienced at round $t+\tau$. For the composite feedback, we assume that the reward of the pull performed at round $t$ is spread over the next $\tau \ge 1$ rounds with fixed weights $(w_1,\dots,w_\tau)$. Denoting with $R_t$ the full reward (not observed) due to the pull performed at round $t$, the agent  at round $t$ observes the weighted sum of the rewards reported below:\footnote{It is worth noting that the fixed-delay case is a particular case of composite feedback, where $w_1=\dots=w_{\tau-1}=0$ and $w_\tau=1$.}
\begin{align}
    \sum_{l=1}^{\tau} w_l R_{t-l}.
\end{align}
These two cases can be modeled as DLBs with a suitable encoding of the arms and choice of matrices. In particular, assuming to have $K$ arms, we take the arm set $\mathcal{U}$ to be the canonical basis of $\Reals^K$, and we denote with $\bm{\mu}$ the vector of expected rewards.
We define $\boldsymbol{\theta} = \boldsymbol{0}$ and:
$$
    \As = \begin{pmatrix}
                    0 & 0 & \dots & 0 & 0 \\
                    1 & 0 & \dots & 0 & 0 \\
                    0 & 1 & \dots & 0 & 0\\
                    \vdots & \vdots & \ddots & \vdots & \vdots \\
                    0 & 0 & \dots & 1 & 0 \\
                    \end{pmatrix} \in \mathbb{R}^{\tau \times \tau},
    \qquad
    \Bs = \begin{pmatrix}
                    \bm{\mu}^T_K \\ \bm{0}^T_K \\ \bm{0}^T_K \\ \vdots \\ \bm{0}^T_K
                    \end{pmatrix} \in \Reals^{\tau \times K},
$$
$$
    \boldsymbol{\omega}_{\text{delay}} = \begin{pmatrix}
                            0 \\ 0 \\ 0 \\ \vdots \\ 1
                            \end{pmatrix} \in \Reals^{\tau},
    \qquad
    \boldsymbol{\omega}_{\text{composite}} = \begin{pmatrix}
                            w_1 \\ w_2 \\ w_3 \\ \vdots \\ w_{\tau}
                            \end{pmatrix} \in \Reals^{\tau}.
$$

However, DLBs cannot model random or adversarial delays. Nevertheless, DLBs can capture scenarios of composite feedback in which the reward is spread over an infinite number of rounds. Keeping the $K$-armed case introduced above, we can consider the simplest example of a reward that spreads as an autoregressive process AR(1) with parameter $\gamma \in (0,1)$, that cannot be represented using the standard composite feedback. In such a case, we simply need a system with order $n=1$ with matrices (actually scalars):
\begin{align*}
    \As = \gamma, \qquad \Bs = \bm{\us}^T, \qquad \vomega = 1.
\end{align*}
Clearly, one can consider AR($m$) processes~\citep{bacchiocchi2022autoregressive} by employing systems of order $n = m > 1$.

\subsection{Partially Observable Markov Decision Processes}
As already noted, looking at DLBs in their generality, we realize that our model is a particular subclass of the Partially Observable Markov Decision Processes~\citep[POMDP,][]{astrom1965optimal}. However, in the POMDP literature, no particular structure of the hidden state dynamics is assumed. The specific linear dynamics are rarely considered, as well as the possibility of a reward that is a linear combination of the hidden state and the action. Nevertheless, several works accounted for the presence of constraints~\citep{isom2008piecewise,undurti2010online,kim2011point} without exploiting the linearity and without regret guarantees.

\subsection{Adversarial Bandits}\label{apx:advApx}
It is worth elaborating on the adaptation of adversarial MAB algorithms to this setting. First, since the reward distribution in DLBs depends at every round $t$ on the sequence of actions played by the agent prior to $t$, we can reduce the DLB setting to an adversarial bandit with an \emph{adaptive} (or non-oblivious) adversary. Second, such an adversary must have \emph{infinite memory} in principle. Third, our regret definition of Section~\ref{sec:problemformulation} is a \emph{policy regret}~\citep{dekel2012online} that compares the algorithm performance against playing the optimal policy in hindsight from the beginning, as opposed to the \emph{external regret} often employed for non-adaptive adversaries. It is well known that for infinite-memory adaptive adversaries, no algorithm can achieve sublinear policy regret. Nevertheless, for \settingNameShort setting, we know that the effect of the past is always vanishing (given Assumption~\ref{ass:spectralNorm} enforcing $\rho (\As) < 1$), so we can approximate our setting as a \emph{finite-memory} setting, by considering memory length $k \propto \lceil \frac{\log{M}}{\log{1/\overline{\rho}}} \rceil$, where $M$ is the one defined in Algorithm~\ref{alg:alg}~(line~\ref{line:define_M}), with an additional regret term only logarithmic in the optimization horizon $T$. Then, given this approximation, we can make use of an adversarial bandit algorithm (designed for non-adaptive adversaries) in the framework proposed by~\citet{dekel2012online} to make it effective for the finite-memory adaptive adversary setting. In the case of an optimal algorithm, such as \expthree~\cite{AuerCFS02}, suffering an external regret of order $\widetilde{\mathcal{O}} (\sqrt{MT})$, being $M$ the number of arms, the version to address this  finite-memory adaptive adversary setting suffers a regret bounded by $\widetilde{\mathcal{O}} ((k+1)M^{1/3}T^{2/3})$, as shown in Theorem 2 of~\citet{dekel2012online}.

\subsection{Other Approaches}
Non-stationary bandits~\citep{GurZB14} can be regarded as bandits with a hidden state that evolves through a (possibly non-linear) dynamics. The main difference compared with our DLBs is that the hidden state evolves in an \emph{uncontrollable} way, \ie it does not depend on the sequence of actions performed so far. \citet{RussacVC19} extend the linear bandit setting by considering a non-stationary evolution of the parameter $\vtheta^*_t$. The notion of \emph{dynamic} bandit is further studied by~\citet{chen2021dynamic}, where an auto-regressive process is considered for the evolution of the reward through time and by~\citet{Nobari19} that propose a practical approach to cope with this setting.

\section{Proofs and Derivations}\label{apx:proofs_all_paper}
In this section, we provide the proofs we have omitted in the main paper.

\subsection{Proofs of Section~\ref{sec:problemformulation}}\label{proof:0001}

Before we proceed, we introduce a different notion of regret useful for analysis purposes, that we name \emph{offline regret}. This notion of regret compares $J^*$ with the steady-state performance of the action $\us_t = \vpi_t(H_{t-1})$ played at each round $t \in \dsb{T}$ by the agent: 
\begin{equation}
R^{\text{off}}(\underline{\vpi},T) \coloneqq T  J^* -  \sum_{t=1}^T J(\us_t).
\end{equation}
We denote with $\mathbb{E}R^{\text{off}}(\underline{\vpi},T)$ the \emph{expected offline regret}, where the expectation is taken \wrt the randomness of the reward.
Clearly, the two notions of regret coincide when the system has no dynamics.

The following result relates the offline and the (online) expected regret.

\begin{lemma}\label{lemma:offOnRel}
Under Assumptions~\ref{ass:spectralNorm} and~\ref{ass:boundedness}, for any policy $\bm{\underline{\pi}}$, it holds that:
\begin{align*}
	\left| \E R^{\text{off}}(\bm{\underline{\pi}}, T) - \E R(\bm{\underline{\pi}}, T) \right| \le \frac{\Omega \Phi(\As) B U}{(1-\rho(\As))^2} + \frac{\Omega \Phi(\As) X }{1-\rho(\As)}.
\end{align*}
\end{lemma}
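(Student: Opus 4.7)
The plan is to exploit the straightforward identity
\[
R^{\text{off}}(\bm{\underline{\pi}},T) - R(\bm{\underline{\pi}},T) \;=\; \sum_{t=1}^T \bigl(y_t - J(\us_t)\bigr),
\]
so that, taking expectations and using $\mathbb{E}[\eta_t]=0$, the job reduces to controlling $\sum_{t=1}^T \bigl(\mathbb{E}[y_t] - J(\us_t)\bigr)$. The natural strategy is to condition on the action trajectory $\us_{1:T}$ (a random sequence under $\bm{\underline{\pi}}$) and to compare, term by term, the true expected reward with the steady-state value $J(\us_t) = \inner{\vtheta}{\us_t} + \vomega^\transpose (\Is_n-\As)^{-1}\Bs\us_t$.

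The first step is to unroll the state. Since $\xs_t = \As^{t-1}\xs_1 + \sum_{s=1}^{t-1}\As^{t-1-s}(\Bs\us_s + \epsilons_s)$ and both $\eta_t$ and every $\epsilons_s$ are zero-mean (and independent of the actions that precede them), we obtain
\[
\mathbb{E}[y_t\mid \us_{1:t}] - J(\us_t) = \vomega^\transpose \As^{t-1}\xs_1 + \vomega^\transpose\!\!\sum_{s=1}^{t-1} \As^{t-1-s}\Bs\us_s - \vomega^\transpose\!\sum_{s=0}^{\infty}\As^{s}\Bs\us_t.
\]
Next I would sum this over $t \in \dsb{T}$ and swap the order of the double sum in the middle term (setting $r=s$, which then runs over $r\in\dsb{1,T-1}$ while the inner index $k=t-1-r$ runs over $k\in\dsb{0,T-1-r}$). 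After pairing with the third term, the “action contribution” collapses to
\[
-\sum_{r=1}^{T-1} \vomega^\transpose\!\!\Biggl(\sum_{j=T-r}^{\infty}\As^{j}\Biggr)\!\Bs\us_r \;-\; \vomega^\transpose\!\sum_{s=0}^{\infty}\As^{s}\Bs\us_T,
\]
which is a clean telescoped form exposing only the leftover tails of the geometric matrix series.

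Finally, I would apply the standard spectral-radius bound $\|\As^k\|_2\le \Phi(\As)\rho(\As)^k$ (Assumption~\ref{ass:spectralNorm}) together with Assumption~\ref{ass:boundedness} $(\|\xs_1\|_2\le X, \|\Bs\|_2\le B, \|\us_r\|_2\le U, \|\vomega\|_2\le \Omega)$. The initial-state contribution yields $\sum_{t=1}^T \|\As^{t-1}\|_2 \le \Phi(\As)/(1-\rho(\As))$, giving the $\tfrac{\Omega X \Phi(\As)}{1-\rho(\As)}$ term. For the action part, the inner tail sums to $\Phi(\As)\rho(\As)^{T-r}/(1-\rho(\As))$, and summing over $r$ yields another geometric series bounded by $\rho(\As)/(1-\rho(\As))$; combining with the leftover $\us_T$ term gives $\tfrac{\rho(\As)+(1-\rho(\As))}{(1-\rho(\As))^2} = \tfrac{1}{(1-\rho(\As))^2}$, producing the $\tfrac{\Omega B U \Phi(\As)}{(1-\rho(\As))^2}$ term. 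Since all bounds are deterministic in $\us_{1:T}$, they transfer to expectations by the tower property, and the triangle inequality on $|\mathbb{E} R^{\text{off}}-\mathbb{E} R|$ closes the argument.

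The only mildly delicate point is the change of order of summation with the telescoping that cancels the $T$-many copies of the steady-state $\vomega^\transpose(\Is_n-\As)^{-1}\Bs\us_t$ against the truncated sums of past actions; once that algebra is done, everything else is elementary. No concentration or self-normalization is needed: the result is purely a deterministic consequence of stability plus a zero-mean noise average.
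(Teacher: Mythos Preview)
Your proof is correct and follows essentially the same route as the paper: unroll the state, swap the order of the double sum to expose the geometric tails $\sum_{j\ge T-r}\As^j$, and bound everything via $\|\As^k\|_2 \le \Phi(\As)\rho(\As)^k$ together with Assumption~\ref{ass:boundedness}. The only cosmetic difference is that you isolate the $r=T$ contribution explicitly and recombine it as $\tfrac{\rho+(1-\rho)}{(1-\rho)^2}$, whereas the paper absorbs it directly into a single sum $\sum_{t=1}^T \rho(\As)^{T-t}\le 1/(1-\rho(\As))$; both yield the same constants.
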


\begin{proof}
First of all, we observe that for any policy, the cumulative effect of the noise components is zero-mean. Thus, it
suffices to consider the deterministic evolution of the system. For every $t \in \dsb{T}$, let us denote with $\E[y_t]$ the expected reward at time $t$ and with $J(\us_t)$ as the steady-state performance when executing action $\us_t$:

	\begin{align*}
		& \E[y_t] = \sum_{s=0}^{t-1} \inner{\hs^{\{s\}}}{ \us_{t-s}} + \vomega^\transpose \As^{t-1} \xs_1 = \vtheta^\transpose \us_t + \vomega^\transpose \sum_{s=1}^{t-1} \As^{s-1} \Bs \us_{t-s} + \vomega^\transpose \As^{t-1} \xs_1, \\
		& J(\us_t) = \vtheta^\transpose \us_t + \vomega^\transpose \left(\Is_d  - \As\right)^{-1} \us_t = \vtheta^\transpose \us_t + \vomega^\transpose \sum_{s=0}^{+\infty} \As^s \us_t.
	\end{align*}
	We now proceed by summing over $t \in \dsb{T}$. First of all, we consider the following preliminary result involving $y_t$, which is obtained by rearranging the summations:
	\begin{align*}
		\sum_{t=1}^T \E[y_t] & = \vtheta^\transpose \sum_{t=1}^T \us_t + \vomega^\transpose \sum_{t=1}^T \sum_{s=1}^{t-1} \As^{s-1} \Bs \us_{t-s} + \vomega^\transpose \sum_{t=1}^T \As^{t-1} \xs_1 \\
		&  = \vtheta^\transpose \sum_{t=1}^T \us_t + \vomega^\transpose \sum_{t=1}^{T-1} \left( \sum_{s=0}^{T-t-1} \As^s \right) \Bs \us_{t} + \vomega^\transpose \sum_{t=1}^T \As^{t-1} \xs_1.
	\end{align*}
	Thus, we have:
	\begin{align}
		\left|\sum_{t=1}^T  \left(J(\us_t) - \E[y_t] \right)\right| & = \left|\vomega^\transpose \sum_{t=1}^T \left( \sum_{s=0}^{+\infty} \As^{s} -  \sum_{s=0}^{T-t-1} \As^s \right) \Bs \us_{t} -  \vomega^\transpose \sum_{t=1}^T \As^{t-1} \xs_1 \right| \notag \\
		& = \left|\vomega^\transpose \sum_{t=1}^T \left( \sum_{s=T-t}^{+\infty} \As^s \right) \Bs \us_{t} - \vomega^\transpose \sum_{t=1}^T \As^{t-1} \xs_1 \right|\notag \\
		& \le \Omega \Phi(\As) B U \sum_{t=1}^T \sum_{s=T-t}^{+\infty}  \rho(\As)^{s} + \Omega \Phi(\As) X \sum_{t=1}^T \rho(\As)^{t-1}\label{ll:-1}\\
		& \le \frac{\Omega \Phi(\As) B U}{1-\rho(\As)} \sum_{t=1}^T \rho(\As)^{T-t} + \frac{\Omega \Phi(\As) X }{1-\rho(\As)}\label{ll:-3}\\
		& \le \frac{\Omega \Phi(\As) B U}{(1-\rho(\As))^2} + \frac{\Omega \Phi(\As) X }{1-\rho(\As)},\label{ll:-4}
	\end{align}
	where line~\eqref{ll:-1} follows from Assumptions~\ref{ass:spectralNorm} and~\ref{ass:boundedness}, lines~\eqref{ll:-3} and~\eqref{ll:-4} follow from bounding the summations with the series. The result follows by observing that:
	\begin{align*}
	\E R^{\text{off}}(\bm{\underline{\pi}}, T) - \E R(\bm{\underline{\pi}}, T)  = \sum_{t=1}^T  \left(J(\us_t) - \E[y_t] \right).
	\end{align*}
\end{proof}

\lb*

\begin{proof}
	To derive the lower bound, we take inspiration from the construction of~\cite{lattimore2020bandit} for linear bandits (Theorem 24.1). We consider a class of DLBs defined in terms of fixed $0 \le \rho < 1$ and $0 \le \epsilon \le \rho$ with $\vomega = \mathbf{1}_d$, $\vtheta = - \frac{2(1-\rho) + \epsilon}{2(1-(\rho-\epsilon))} \mathbf{1}_d$, $\Bs = (1-\rho)\Is_{d}$ and with a diagonal dynamical matrix $\As = \mathrm{diag}(\mathbf{a})$, defined in terms of the vector $\mathbf{a}$ belonging to the set $\mathcal{A} = \{\rho,\rho-\epsilon\}^d$.	The available actions are $\Us = \{-1,1\}^d$. Let us note that $|\mathcal{A}| = |\Us| = 2^d$. Thus, in our set of DLBs, the vector $\mathbf{a}$ fully characterizes the problem. Moreover, we observe that, given the diagonal $\mathbf{a} = \mathrm{diag}(\mathbf{A})$, we can compute the cumulative Markov parameter $\hs_{\mathbf{a}} = \mathrm{sign}(\mathbf{a}) \frac{\epsilon}{2(1-(\rho-\epsilon))}$.\footnote{For a vector $\mathbf{v} \in \Reals^d$, we denote with  $\mathrm{sign}(\mathbf{v}) \in \{-1,1\}^d$ the vector of the signs of the components of $\mathbf{v}$. It is irrelevant how we convene to define the sign of $0$.} As a consequence the optimal action can be defined as $\us_{\mathbf{a}}^* = \mathrm{sign}(\mathbf{a})$, whose performance is given by $J^*_{\mathbf{a}} = \inner{\hs_{\mathbf{a}}}{\us_{\mathbf{a}}^*} = \frac{\epsilon d}{2(1-(\rho-\epsilon))}$.
	
	Let us consider the probability distribution over the canonical bandit model induced by executing a policy $\bm{\underline{\pi}}$ in a DLB characterized by the diagonal of the dynamical matrix $\mathbf{a} \in \mathcal{A}$ and with Gaussian diagonal noise:
	\begin{align*}
		\mathbb{P}_{\mathbf{a}} = \prod_{t=1}^{T} \mathcal{N}(\xs_{t+1}| \As \xs_t + \Bs \us_t, \sigma^2 \Is_d) \mathcal{N} (y_t| \inner{\vtheta}{\us_t} + \inner{\vomega}{\xs_t}, \sigma^2) \pi_t(\us_t | H_{t-1}),
	\end{align*}
 where $H_{t-1}$ is the history of observations up to time $t-1$. We denote with $\mathbb{E}_{\mathbf{a}}$ the expectation induced by the distribution $\mathbb{P}_{\mathbf{a}}$.	For every $i \in \dsb{d}$, let us now consider an alternative DLB instance that differs on the dynamical matrix only. Specifically:
	 \begin{align*}
		\mathbf{a}'_j = \begin{cases}
			\mathbf{a}_j & \text{if } j \neq i \\
			\rho &  \text{if } j = i \text{ and } \mathbf{a}_j = \rho-\epsilon \\
			\rho-\epsilon & \text{if } j=i \text{ and } \mathbf{a}_j = \rho
		\end{cases}, \qquad \forall j \in \dsb{d}.
	\end{align*}
	By relative entropy identities~\citep{lattimore2020bandit}, let $\As = \mathrm{diag}(\mathbf{a})$ and $\As' = \mathrm{diag}(\mathbf{a}')$, we have:
	\begin{align*}
		D_{\text{KL}} \left( \mathbb{P}_{\mathbf{a}} , \mathbb{P}_{\mathbf{a}'} \right) & = \mathbb{E}_{\mathbf{a}} \left[ \sum_{t=1}^T D_{\text{KL}} \left( \mathcal{N}(\cdot| \As \xs_t + \Bs \us_t, \sigma^2 \Is_d) , \mathcal{N}(\cdot| \As' \xs_t + \Bs \us_t, \sigma^2 \Is_d)\right) \right] \\
		& = \frac{1}{2\sigma^2} \sum_{t=1}^T \mathbb{E}_{\mathbf{a}} \left[ \left\| \left(\As - \As'\right) \xs_t \right\|_2^2\right] = \epsilon^2 \mathbb{E}_{\mathbf{a}} \left[ \xs_{t,i}^2 \right].
	\end{align*}
	We proceed at properly bounding the KL-divergence, letting $\mathbf{e}_i$ be the $i$-th vector of the canonical basis of $\Reals^d$ and convening that $\xs_0 = \mathbf{0}_d$:
	\begin{align*}
	\mathbb{E}_{\mathbf{a}} \left[ \xs_{t,i}^2 \right] & =  \mathbb{E}_{\mathbf{a}} \left[ \left( \sum_{s=1}^{t-1} \mathbf{e}_i^\transpose\As^s \Bs \us_{t-s} + \sum_{s=1}^{t-1} \mathbf{e}_i^\transpose \As^s \epsilons_{t-s}\right)^2 \right] \\
	& =  \mathbb{E}_{\mathbf{a}} \left[ \left( (1-\rho) \sum_{s=1}^{t-1} \mathbf{a}_{i}^s \us_{t-s,i} + \sum_{s=1}^{t-1} \mathbf{a}_{i}^s \epsilons_{t-s,i}\right)^2 \right] \\
	& = \mathbb{E}_{\mathbf{a}} \left[  \underbrace{ (1-\rho)^2 \sum_{s=1}^{t-1} \sum_{l=1}^{t-1} \mathbf{a}_i^{s+l} \us_{t-s,i}  \us_{t-l,i}}_{\text{(a)}} +  2 \underbrace{(1-\rho)\sum_{s=1}^{t-1} \sum_{l=1}^{t-1}  \mathbf{a}_i^{s+l} \us_{t-s,i}   \epsilons_{t-l,i}}_{\text{(b)}} +  \underbrace{\sum_{s=1}^{t-1} \sum_{l=1}^{t-1} \mathbf{a}_i^{s+l} \epsilons_{t-s,i}  \epsilons_{t-l,i}}_{\text{(c)}} \right]
	\end{align*}
	Let us start with (a):
	\begin{align*}
	(1-\rho)^2 \mathbb{E}_{\mathbf{a}} \left[\sum_{s=1}^{t-1} \sum_{l=1}^{t-1} \mathbf{a}_i^{s+l} \us_{t-s,i}  \us_{t-l,i} \right] \le (1-\rho)^2 \sum_{s=1}^{t-1} \sum_{l=1}^{t-1} \rho^{s+l} \le 1,
	\end{align*}
	having observed that $|\us_{t-s,i} |, | \us_{t-l,i} | \le 1$, that $|\mathbf{a}_i| \le \rho$, and bounding the summations with the series. Let us move to (b):
	\begin{align*}
		(1-\rho)\mathbb{E}_{\mathbf{a}} \left[ \sum_{s=1}^{t-1} \sum_{l=1}^{t-1}  \mathbf{a}_i^{s+l} \us_{t-s,i}  \epsilons_{t-l,i} \right] & = (1-\rho)\mathbb{E}_{\mathbf{a}} \left[ \sum_{s=1}^{t-1} \sum_{l=s+1}^{t-1}  \mathbf{a}_i^{s+l} \us_{t-s,i}  \epsilons_{t-l,i} \right] \\
		& \quad + (1-\rho)\cancelto{0}{\mathbb{E}_{\mathbf{a}} \left[ \sum_{l=1}^{t-1} \sum_{s=l}^{t-1}  \mathbf{a}_i^{s+l} \us_{t-s,i}  \epsilons_{t-l,i} \right]} \\
		& \le (1-\rho) \sum_{s=1}^{t-1} \sum_{l=s+1}^{t-1}  \rho^{s+l} \mathbb{E}_{\mathbf{a}} \left[ |\epsilons_{t-l,i}| \right] \\
		& \le \frac{\sigma}{1-\rho}  \sqrt{\frac{2}{\pi}},
	\end{align*}
	having observed that $ \us_{t-s,i}$ and $\epsilons_{t-l,i}$ are independent when $s \ge l$ and  $\epsilons_{t-l,i}$ has zero mean, that $|\us_{t-s,i}| \le 1$, that $\mathbf{a}_i^{s+l} \le \rho^{s+l}$, and that the expectation of the absolute value of random variable normally distributed is given by $\E \left[ |\epsilons_{t-l,i}| \right] = \sigma \sqrt{\frac{2}{\pi}}$.
	Finally, let us consider (c):
	\begin{align*}
	\mathbb{E}_{\mathbf{a}} \left[ \sum_{s=1}^{t-1} \sum_{l=1}^{t-1} \mathbf{a}_i^{s+l} \epsilons_{t-s,i}  \epsilons_{t-l,i} \right] & = \mathbb{E}_{\mathbf{a}} \left[ \sum_{s=1}^{t-1} \mathbf{a}_i^{2s} \epsilons_{t-s,i}  \epsilons_{t-s,i} \right]  + 2 \cancelto{0}{\mathbb{E}_{\mathbf{a}} \left[ \sum_{s=1}^{t-2} \sum_{l=s+1}^{t-1} \mathbf{a}_i^{s+l} \epsilons_{t-s,i}  \epsilons_{t-l,i} \right]} \\
	& \le  \sigma^2 \sum_{s=1}^{t-1} \rho^{2s}  \le \frac{\sigma^2}{1-\rho^2} \le \frac{\sigma^2}{1-\rho},
	\end{align*}
	having observed that the noise vectors $\epsilons_{t-l,i} $ and $\epsilons_{t-s,i} $ are independent  whenever $s \neq l$, that $\mathbb{E}_{\mathbf{a}}[\epsilons_{t-s,i}^2]=\sigma^2$, and having bounded the sum with the series. Coming back to the original bound, we have:
	\begin{align*}
	\mathbb{E}_{\mathbf{a}} \left[ \xs_{t,i}^2 \right]  \le 1 + \frac{1}{1-\rho} \left( \sigma^2 + 2\sigma \sqrt{\frac{2}{\pi}}\right).
	\end{align*}
	
	For $i \in \dsb{d}$ and $\mathbf{a} \in \mathcal{A}$, we introduce the symbol:
	\begin{align*}
		p_{\mathbf{a},i} = \mathbb{P}_{\mathbf{a}} \left( \sum_{t=1}^T \indic \{ \mathrm{sign}(\us_{t,i}) \neq \mathrm{sign}(\hs_{\mathbf{a},i}) \} \ge \frac{T}{2} \right).
	\end{align*}
	Thus, for $\mathbf{a}$ and $\mathbf{a}'$ defined as above, by the Bretagnolle-Huber inequality~\citep[][Theorem 14.2]{lattimore2020bandit}, we have:
	\begin{align*}
		p_{\mathbf{a},i} + p_{\mathbf{a}',i} & \ge \frac{1}{2} \exp \left(- D_{\text{KL}} \left(\mathbb{P}_{\mathbf{a}} , \mathbb{P}_{\mathbf{a}'} \right) \right) = \frac{1}{2} \exp \left(-\frac{1}{2\sigma^2} \sum_{t=1}^T \mathbb{E}_{\mathbb{P}} \left[ \left\| \left(\As - \As'\right) \xs_t \right\|_2^2\right] \right)\\
		& \ge \frac{1}{2} \exp \left( -\frac{T\epsilon^2}{2} \left( \frac{1}{\sigma^2} + \frac{1}{1-\rho} \left(1 + \frac{2}{\sigma} \sqrt{\frac{2}{\pi}} \right) \right) \right) \\
		& \ge \frac{1}{2} \exp \left( -\frac{2 T\epsilon^2}{1-\rho}  \right),
	\end{align*}
	having selected $\sigma^2=1$.
	We use the notation $\sum_{\mathbf{a}_{-i}}$ to denote the multiple summation $ \sum_{\mathbf{a}_1,\dots,\mathbf{a}_{i-1},\mathbf{a}_{i+1},\dots,\mathbf{a}_d \in \{\rho,\rho-\epsilon\}^{d-1}}$:
	\begin{align*}
		\sum_{\mathbf{a} \in \mathcal{A}} 2^{-d} \sum_{i=1}^d p_{\mathbf{a},i} & = \sum_{i=1}^d \sum_{\mathbf{a}_{-i}} 2^{-d} \sum_{\mathbf{a}_i \in \{\rho,\rho-\epsilon\}} p_{\mathbf{a},i} \\
		& \ge \sum_{i=1}^d \sum_{\mathbf{a}_{-i}} 2^{-d} \cdot \frac{1}{2} \exp\left( - \frac{2 T \epsilon^2}{1-\rho} \right) \\
		& = \frac{d}{4}\exp\left( - \frac{2 T \epsilon^2}{1-\rho} \right).
	\end{align*}
	Therefore, with this averaging argument, we can conclude that there exists $\mathbf{a}^* \in \mathcal{A}$ such that $\sum_{i=1}^d p_{\mathbf{a}^*,i} \ge \frac{d}{4}\exp\left( - \frac{2 T \epsilon^2}{1-\rho} \right)$. For this choice $\mathbf{a}^*$, we consider $\us^*_{\mathbf{a}^*} = \mathrm{sign}(\mathbf{a}^*) \in \Us$, we can proceed to the lower bound on the expected offline regret:
	\begin{align*}
		\mathbb{E} R^{\text{off}}(\bm{\underline{\pi}}, T) & = \sum_{t=1}^T \mathbb{E}_{\mathbf{a}^*}\left[ \inner{\hs_{\mathbf{a}^*}}{\us^*_{\mathbf{a}^*} - \us_t} \right] \\
		& = \sum_{t=1}^T \mathbb{E}_{\mathbf{a}^*} \left[ \sum_{i=1}^d \indic\{ \mathrm{sign}(\us_{t,i}) \neq \mathrm{sign}({\hs_{\mathbf{a}^*,i}})\}  \frac{\epsilon}{1-(\rho-\epsilon)} \right] \\
		& = \frac{\epsilon}{1-(\rho-\epsilon)} \sum_{t=1}^T \sum_{i=1}^d \mathbb{P}_{\mathbf{a}^*} \left(  \mathrm{sign}(\us_{t,i}) \neq \mathrm{sign}({\hs_{\mathbf{a}^*,i}}) \right) \\
		& \ge \frac{T \epsilon }{2 (1-(\rho-\epsilon))} \sum_{i=1}^d \mathbb{P}_{\mathbf{a}^*} \left( \sum_{t=1}^T \indic\{ \mathrm{sign}(\us_{t,i}) \neq \mathrm{sign}({\hs_{\mathbf{a}^*,i}})\} \ge \frac{T}{2} \right) \\
		& = \frac{T \epsilon }{2 (1-(\rho-\epsilon))} \sum_{i=1}^d p_{\mathbf{a}^*,i} \ge \frac{T d \epsilon }{8 (1-(\rho-\epsilon))} \exp\left( - \frac{2 T \epsilon^2}{1-\rho} \right).
	\end{align*}
	We now maximize over $0 \le \epsilon < \rho$. To this end, we perform the substitution $\epsilon = \frac{(1-\rho)\widetilde{\epsilon}}{1-\widetilde{\epsilon}}$, with $0 \le \widetilde{\epsilon} \le {\rho}$:
	\begin{align*}
	\frac{T d \epsilon }{8 (1-(\rho-\epsilon))} \exp\left( - \frac{2 T \epsilon^2}{1-\rho} \right) = \frac{T d\widetilde{\epsilon} }{8} \exp \left( -\frac{2\widetilde{\epsilon}^2 T (1-\rho)}{( 1- \widetilde{\epsilon})^2} \right) \ge \frac{T d\widetilde{\epsilon} }{8} \exp \left( -8\widetilde{\epsilon}^2 T (1-\rho) \right),
	\end{align*}
	where the last inequality holds for $\widetilde{\epsilon} \le \frac{1}{2}$. We not take $\widetilde{\epsilon} = \frac{1}{\sqrt{8T(1-\rho)}}$ which is smaller than $\frac{1}{2}$ if $T \ge \frac{1}{2(1-\rho)}$, to get:
	\begin{align*}
	\mathbb{E} R^{\text{off}}(\bm{\underline{\pi}}, T) \ge \frac{d \sqrt{T}}{\sqrt{512e(1-\rho)}}.
	\end{align*}
	Notice that with this choice of $\widetilde{\epsilon}$ (and, consequently, of $\epsilon$), for sufficiently large $T$, we fulfill Assumption~\ref{ass:boundedness}. Indeed:
	\begin{align*}
		\vtheta = -1 + \frac{1}{\sqrt{32 T(1-\rho)}}, \quad J^*_{\mathbf{a}} = \frac{d}{\sqrt{32T(1-\rho)}}.
	\end{align*}
	Thus, we require $T \ge \mathcal{O} \left( \frac{d^2}{1-\overline{\rho}} \right)$. Finally, to convert this result to the expected regret, we employ Lemma~\ref{lemma:offOnRel}:
	\begin{align*}
	\mathbb{E} R^{\text{off}}(\bm{\underline{\pi}}, T) \ge \mathbb{E} R^{\text{off}}(\bm{\underline{\pi}}, T) - \frac{d}{1-\rho}.
	\end{align*}
	Under the constraint $T \ge \mathcal{O} \left( \frac{d^2}{1-\overline{\rho}} \right)$, we observe that:
	\begin{align*}
	\mathbb{E} R^{\text{off}}(\bm{\underline{\pi}}, T)  \ge \Omega \left( \frac{d \sqrt{T}}{(1-\rho)^{\frac{1}{2}}} \right).
	\end{align*}
\end{proof}

\optimalPolicy*
\begin{proof}
	Referring to the notation of Appendix~\ref{apx:finiteHorizon}, we first observe that for every policy $\underline{\vpi}$, we have $J(\underline{\vpi}) = \liminf_{H \rightarrow + \infty} J_H(\underline{\vpi})$, where $J_H(\underline{\vpi}) = \frac{1}{H} \E[\sum_{t=1}^H y_t]$, is the $H$-horizon expected average reward. Let us start with Equation~\eqref{p:002}, a fixed finite $H \in \Nat$, and considering the sequence of actions $(\us_1, \us_2, \dots)$ generated by policy $\underline{\vpi}$:
	\begin{align*}
	    J_{H}(\underline{\vpi}) & = \frac{1}{H} \sum_{s=1}^H \inner{\hs^{\dsb{0,H-s}}}{ \E[\us_{s}]} +  \frac{1}{H} \sum_{t=1}^H \vomega^\transpose \As^{t-1} \E[\xs_1] \\
	    & =  \frac{1}{H} \sum_{s=1}^H \inner{\hs}{ \E[\us_{s}]} - \frac{1}{H} \sum_{s=1}^H \inner{\hs^{\srb{H-s+1,+\infty}}}{\E[\us_{s}]} + \frac{1}{H} \sum_{t=1}^H \vomega^\transpose \As^{t-1} \E[\xs_1] .
	\end{align*}
	Now, we consider two bounds on $J_{H}(\underline{\vpi})$, obtained by an application of Cauchy-Schwarz inequality on the second addendum:
	\begin{align*}
	    & J_{H}(\underline{\vpi}) \le  \frac{1}{H} \sum_{s=1}^H \inner{\hs}{ \E[\us_{s}]} + \frac{1}{H} \sum_{s=1}^H \left\|\hs^{\srb{H-s+1,+\infty}}\right\|_2 \|\E[\us_{s}]\|_2 \\
	    & \qquad\qquad\qquad  \qquad\qquad\qquad + \frac{1}{H} \sum_{t=1}^H \vomega^\transpose \As^{t-1} \E[\xs_1] \eqqcolon J_{H}^{\uparrow}(\underline{\vpi}) , \\
	    & J_{H}(\underline{\vpi}) \ge  \frac{1}{H} \sum_{s=1}^H \inner{\hs}{ \E[\us_{s}]} - \frac{1}{H} \sum_{s=1}^H \left\|\hs^{\srb{H-s+1,+\infty}}\right\|_2 \|\E[\us_{s}]\|_2 \\
	    & \qquad\qquad\qquad \qquad\qquad\qquad + \frac{1}{H} \sum_{t=1}^H \vomega^\transpose \As^{t-1} \E[\xs_1] \eqqcolon J_{H}^{\downarrow}(\underline{\vpi}).
	\end{align*}
	Concerning the term $\|\E[\us_{s}]\|_2$, we have that $\|\E[\us_{s}]\|_2 \le \E[ \|\us_s\|_2] \le U$, having used Jensen's inequality and under Assumption~\ref{ass:boundedness}. Regarding the second term, using Assumptions~\ref{ass:spectralNorm} and~\ref{ass:boundedness}, we obtain:
	\begin{align}
	    \left\|\hs^{\srb{H-s+1,+\infty}}\right\|_2 & = \left\| \sum_{l=H-s+1}^{+\infty} \Bs^\transpose (\As^{l-1})^\transpose \vomega \right\|_2 \notag\\
	    & \le B \Omega \sum_{l=H-s+1}^{+\infty} \Phi(\As) \rho(\As)^{l-1} \notag\\
	    & =  B \Omega \Phi(\As) \frac{\rho(\As)^{H-s}}{1-\rho(\As)}. \label{p:-100}
	\end{align}
	Plugging this result into the summation over $s$, we obtain:
	\begin{align*}
	   \frac{1}{H} \cdot \frac{B \Omega \Phi(\As)}{1-\rho(\As)} \sum_{s=1}^H  \rho(\As)^{H-s} = \frac{B \Omega \Phi(\As)(1-\rho(\As)^H)}{H(1-\rho(\As))^2}.
	\end{align*}
	It is simple to observe that the last term approaches zero as $H\rightarrow +\infty$. Moreover, with an analogous argument, it can be proved that $\left\| \frac{1}{H} \sum_{t=1}^H \vomega^\transpose \As^{t-1} \E[\xs_1] \right\|_2 \rightarrow 0$ as $H\rightarrow +\infty$. Thus, we have that $\liminf_{H \rightarrow +\infty} J_{H}^{\downarrow}(\underline{\vpi}) = \liminf_{H \rightarrow +\infty} J_{H}^{\uparrow}(\underline{\vpi})$. Consequently, by the squeezing theorem of limits, we have:
	\begin{align*}
	    J(\underline{\vpi}) & = \liminf_{H \rightarrow +\infty} J_{H}^{\uparrow}(\underline{\vpi}) =  \liminf_{H \rightarrow +\infty} J_{H}^{\downarrow}(\underline{\vpi}) \\
	    & = \liminf_{H \rightarrow +\infty} \frac{1}{H} \sum_{s=1}^H \inner{\hs}{ \E[\us_{s}]}  = \hs^\transpose \left(\liminf_{H \rightarrow +\infty} \frac{1}{H} \sum_{s=1}^H  \E[\us_{s}]\right).
	\end{align*}
	It follows that an optimal policy is a policy that plays the constant action $\us^* \in \argmax_{\us \in \Us} \inner{\hs}{\us}$.
\end{proof}

\subsection{Proofs of Section~\ref{sec:algorithm}}\label{apx:Proofalgorithm}

\concentration*

\begin{proof}
First of all, let us properly relate the round $t \in \dsb{T}$ and the index of the epoch $m \in \dsb{M}$. For every epoch $m \in \dsb{M}$, we denote with $t_m$ the last round of epoch $m$ (\ie the one in which we update the relevant matrices $\Vs_t$ and $\bs_t$):\footnote{It is worth noting that the variables $t_m$ are deterministic.}
\begin{align*}
	t_0 = 0, \qquad t_{m} = t_{m-1} + 1 + H_m.
\end{align*}
We now proceed to define suitable filtrations. Let $\mathbb{F} = (\mathcal{F}_t)_{t \in \dsb{T}}$ such that for every $t \ge 1$, the random variables $\{\us_1, y_1,\dots,\us_{t-1}, y_{t-1}, \us_t\}$ are $\mathcal{F}_{t-1}$-measurable, \ie $\mathcal{F}_{t-1} = \sigma(\us_1, y_1,\dots,\us_{t-1}, y_{t-1}, \us_t)$. Let us also consider the filtration indexed by $m$, denoted with $\widetilde{\mathbb{F}} = (\widetilde{\mathcal{F}}_m)_{m \in \dsb{M}}$ and defined for all $m \in \dsb{M}$ as  $\widetilde{\mathcal{F}}_m = \mathcal{F}_{t_{m+1}-1}$. Thus, the random variables $\widetilde{\mathcal{F}}_{m-1}$-measurable are those realized until the end of epoch $m$ except for $y_{t_m}$.

Since the estimates $\widehat{\hs}_t$ do not change within an epoch, we need to guarantee the statement for all rounds $\{t_{m}\}_{m \in \dsb{M}}$ only. For these rounds, we define the following quantities:
\begin{align*}
	& \widetilde{y}_m = y_{t_m}, \\
	& \widetilde{\us}_m = \us_{t_m}, \qquad \text{(or any $\us_l$ with $l \in \dsb{t_{m-1}+1,t_m}$ since they are all equal)}\\
	& \widetilde{\xi}_m = \eta_{t_m} + \sum_{s=1}^{H_m+1} \vomega^\transpose \As^{s-1} \epsilons_{t_m-s}, \\
	& \widetilde{\xs}_{m-1} = \xs_{t_{m-1}}, \\
	& \widetilde{\hs}_m = \widehat{\hs}_{t_m},\\
	& \widetilde{\Vs}_{m} = \Vs_{t_m}, \\
	& \widetilde{\bs}_{m} = \bs_{t_m}.
\end{align*}
We prove that $(\widetilde{\xi}_m )_{m \in \dsb{M}}$ is a martingale difference process adapted to the filtration $\widetilde{\mathbb{F}} $. To this end, we recall that, by construction, $(\eta_t)_{t \in \dsb{T}}$ and $(\epsilons_t)_{t \in \dsb{T}}$ are martingale difference processes adapted to the filtration $\mathbb{F}$. It is clear that $\widetilde{\xi}_m$ is $\mathcal{F}_m$-measurable and, being $\sigma^2$-subgaussian it is absolutely integrable. Furthermore, using the tower law of expectation:
\begin{align*}
	\E\left[ \widetilde{\xi}_m | \widetilde{\mathcal{F}}_{m-1}\right] & =  \E\left[ \eta_{t_m} + \sum_{s=1}^{H_m+1} \vomega^\transpose \As^{s-1} \epsilons_{t_m-s} | \mathcal{F}_{t_{m}-1}\right]  \\
	& = \E\left[ \eta_{t_m} | \mathcal{F}_{t_{m}-1} \right] + \E\left[\sum_{s=1}^{H_m+1} \vomega^\transpose \As^{s-1}  \E[\epsilons_{t_m-s} |\mathcal{F}_{t_{m}-s-1}] | \mathcal{F}_{t_{m}-1} \right]=0,
\end{align*}
since the system is operating by persisting the action after having decided it at the beginning of the epoch. Thus, by exploiting the decomposition in Equation~\eqref{eq:decompY}, we can write:
\begin{align}
	\widetilde{y}_m  = y_{t_m} & = \inner{\hs^{\dsb{0,H_m+1}}}{ \widetilde{\us}_m} + \vomega^\transpose \As^{H_m+1} \xs_{t_{m-1}} + \eta_{t_m}  + \sum_{s=1}^{H_m+1} \vomega^\transpose \As^{s-1} \epsilons_{t_m-s} \notag\\
	& = \inner{\hs^{\dsb{0,H_m+1}}}{\widetilde{\us}_m} + \vomega^\transpose \As^{H_m+1} \widetilde{\xs}_{m-1} + \widetilde{\xi}_m \notag\\
	& =  \inner{\hs}{ \widetilde{\us}_m} - \inner{\hs^{\srb{H_m+2,\infty}}}{ \widetilde{\us}_m } + \vomega^\transpose \As^{H_m+1} \widetilde{\xs}_{m-1} + \widetilde{\xi}_m,\label{p:-1000}
\end{align}
where we simply exploit the identity $\hs = \hs^{\dsb{0,H_m+1}} + \hs^{\srb{H_m+2,\infty}} $. We now introduce the following vectors and matrices:
\begin{align*}
	&\widetilde{\mathbf{U}}_m = 
		\begin{pmatrix}
			\widetilde{\us}_{1}^\transpose \\
			\vdots\\
			\widetilde{\us}_{m}^\transpose
		\end{pmatrix} \in \Reals^{m \times d}, \qquad
		&&\widetilde{\mathbf{y}}_m = 
		\begin{pmatrix}
			\widetilde{y}_{1} \\
			\vdots\\
			\widetilde{y}_{m} 
		\end{pmatrix} \in \Reals^{m},
\\
& \widetilde{\bm{\xi}}_m = \begin{pmatrix}
			\widetilde{\xi}_{1} \\
			\vdots\\
			\widetilde{\xi}_{m} 
		\end{pmatrix} \in \Reals^{m}, \qquad 
&& \widetilde{\bm{\nu}}_m = \begin{pmatrix}
			\vomega^\transpose \As^{H_1+2} \widetilde{\xs}_0 \\
			\vdots\\
			\vomega^\transpose \As^{H_m+2} \widetilde{\xs}_{m-1}
		\end{pmatrix} \in \Reals^{m},\\
		& \widetilde{\mathbf{g}}_m = \begin{pmatrix}
			\inner{\hs^{\srb{H_1+1,\infty}}}{ \widetilde{\us}_1} \\
			\vdots\\
			\inner{\hs^{\srb{H_m+1,\infty}}}{ \widetilde{\us}_m}
		\end{pmatrix} \in \Reals^{m}.
\end{align*}
Using the vectors and matrices above, we observe that $\widetilde{\Vs}_m = \lambda \Is + \widetilde{\mathbf{U}}_m^\transpose \widetilde{\mathbf{U}}_m$ and $\widetilde{\bs}_m = \widetilde{\mathbf{U}}_m^\transpose \widetilde{\ys}_m$. Furthermore, by exploiting Equation~\eqref{p:-1000}, we can write:
\begin{align*}
	\widetilde{\ys}_m = \widetilde{\mathbf{U}}_m \hs - \widetilde{\mathbf{g}}_m + \widetilde{\bm{\nu}}_m + \widetilde{\bm{\xi}}_m.
\end{align*}
Let us consider the estimate at $m \in \dsb{M}$:
\begin{align*}
	\widetilde{\hs}_m =  \widetilde{\Vs}_{m}^{-1}\widetilde{\bs}_{m} & = \left( \lambda \Is + \widetilde{\mathbf{U}}_m^\transpose \widetilde{\mathbf{U}}_m \right)^{-1} \widetilde{\mathbf{U}}_m^\transpose \widetilde{\ys}_m \\
	& = \left( \lambda \Is + \widetilde{\mathbf{U}}_m^\transpose \widetilde{\mathbf{U}}_m \right)^{-1} \widetilde{\mathbf{U}}_m^\transpose  \left(\widetilde{\mathbf{U}}_m \hs- \widetilde{\mathbf{g}}_m + \widetilde{\bm{\nu}}_m + \widetilde{\bm{\xi}}_m \right)\\
	& =\hs + \left( \lambda \Is + \widetilde{\mathbf{U}}_m^\transpose \widetilde{\mathbf{U}}_m \right)^{-1}  \left( - \lambda  \hs - \widetilde{\mathbf{U}}_m^\transpose \widetilde{\mathbf{g}}_m +\widetilde{\mathbf{U}}_m^\transpose \widetilde{\bm{\nu}}_m + \widetilde{\mathbf{U}}_m^\transpose \widetilde{\bm{\xi}}_m \right).
\end{align*}
We now proceed at bounding the $\|\cdot\|_{\widetilde{\Vs}_m}$-norm, and exploit the triangle inequality:
\begin{align*}
 \left\| \widetilde{\hs}_m - \hs\right\|_{\widetilde{\Vs}_m} &\le \lambda \left\| \widetilde{\Vs}_m^{-1} \hs\right\|_{\widetilde{\Vs}_m} + \left\| \widetilde{\Vs}_m^{-1} \widetilde{\mathbf{U}}_m^\transpose \widetilde{\mathbf{g}}_m \right\|_{\widetilde{\Vs}_m} + \left\| \widetilde{\Vs}_m^{-1} \widetilde{\mathbf{U}}_m^\transpose \widetilde{\bm{\nu}}_m \right\|_{\widetilde{\Vs}_m} + \left\| \widetilde{\Vs}_m^{-1} \widetilde{\mathbf{U}}_m^\transpose \widetilde{\bm{\xi}}_m  \right\|_{\widetilde{\Vs}_m} \\
 & = \underbrace{\lambda \left\|  \hs \right\|_{\widetilde{\Vs}_m^{-1}}}_{\text{(a)}} + \underbrace{\left\|  \widetilde{\mathbf{U}}_m^\transpose \widetilde{\mathbf{g}}_m \right\|_{\widetilde{\Vs}_m^{-1}}}_{\text{(b)}} + \underbrace{\left\| \widetilde{\mathbf{U}}_m^\transpose \widetilde{\bm{\nu}}_m \right\|_{\widetilde{\Vs}_m^{-1}}}_{\text{(c)}} + \underbrace{\left\|\widetilde{\mathbf{U}}_m^\transpose \widetilde{\bm{\xi}}_m  \right\|_{\widetilde{\Vs}_m^{-1}}}_{\text{(d)}}, 
\end{align*}
where we simply exploited the identity $\|\Vs^{-1} \xs \|_{\Vs}^2 = \xs^\transpose \Vs^{-1} \Vs \Vs^{-1} \xs = \xs^\transpose \Vs^{-1}  \xs = \| \xs \|_{\Vs^{-1}}^2$. We now bound one term at a time. Let us start with (a):
\begin{align*}
	\text{(a)}^2 = \lambda^2 \left\|  \hs\right\|_{\widetilde{\Vs}_m^{-1}}^2 & = \lambda^2   \hs^\transpose \widetilde{\Vs}_m^{-1}\hs \\
	& \le \lambda^2 \left\| \widetilde{\Vs}_m^{-1} \right\|_2  \left\| \hs \right\|_2^2 \\
	& \le \lambda \left\| \hs \right\|_2^2 \\
	& \le \lambda \left(  \Theta + \frac{\Omega B \Phi(\As) }{1-\rho(\As)} \right)^2,
\end{align*}
where we observed that $ \left\| \widetilde{\Vs}_m^{-1} \right\|_2 \le  \left\| \widetilde{\Vs}_m \right\|_2^{-1} \le \lambda^{-1}$. Finally, we have bounded the norm of $\hs$:
\begin{align*}
\left\|\hs\right\|_2 & = \left\| \sum_{s =0}^{+\infty} \hs^{\{s\}} \right\|_2\\
& \le  \sum_{s =0}^{+\infty} \left\| \hs^{\{s\}} \right\|_2 \\
& \le \| \vtheta \|_2 + \|\vomega\|_2 \|\Bs\|_2 \sum_{s =1}^{+\infty} \| \As \|^{s-1} \\
& \le \Theta + \frac{\Omega B \Phi(\As) }{1-\rho(\As)},
\end{align*}
where we have exploited Assumptions~\ref{ass:spectralNorm} and~\ref{ass:boundedness}.

We now move to term (b):
\begin{align*}
	\text{(b)}^2=\left\|  \widetilde{\mathbf{U}}_m^\transpose \widetilde{\mathbf{g}}_m \right\|_{\widetilde{\Vs}_m^{-1}}^2 & = \widetilde{\mathbf{g}}_m^\transpose \widetilde{\mathbf{U}}_m \widetilde{\Vs}_m^{-1} \widetilde{\mathbf{U}}_m^\transpose \widetilde{\mathbf{g}}_m \\
	& \le \frac{1}{\lambda} \left\|\widetilde{\mathbf{g}}_m^\transpose \widetilde{\mathbf{U}}_m  \right\|_2^2 \\
	& = \frac{1}{\lambda} \left\|\sum_{l=1}^m \inner{ \widetilde{\us}_l}{ \hs^{\srb{H_l+2,\infty}}} \widetilde{\us}_l  \right\|_2^2 \\
	& \le \frac{1}{\lambda} \left(\sum_{l=1}^m \| \widetilde{\us}_l\|_2^2 \left\| \hs^{\srb{H_l+2,\infty}} \right\|_2 \right)^2\\
	& \le \frac{U^4 \Omega^2 B^2 \Phi(\As)^2}{\lambda(1-\rho(\As))^2} \cdot \left( \sum_{l=1}^m \rho(\As)^{H_l+1} \right)^2,
\end{align*}
where we have employed the following inequality:
\begin{align*}
	\left\|\hs^{\srb{H_l+2,\infty}}\right\|_2 & = \left\| \vomega^\transpose  \sum_{j=H_l+2}^{+\infty} \As^{j-1} \Bs\right\|_2 \\
	& \le \left\| \vomega \right\|_2 \left\| \Bs\right\|_2 \sum_{j=H_l+2}^{+\infty} \left\|  \As^{j-1}\right\|_2  \\
	& \le \Omega B \Phi(\As) \frac{\rho(\As)^{H_l+1}}{1-\rho(\As)}.
	\end{align*}
Let us now consider term (c):
\begin{align*}
	\text{(c)}^2 = \left\| \widetilde{\mathbf{U}}_m^\transpose \widetilde{\bm{\nu}}_m \right\|_{\widetilde{\Vs}_m^{-1}}^2 & = \widetilde{\bm{\nu}}_m^\transpose \widetilde{\mathbf{U}}_m\widetilde{\Vs}_m^{-1}\widetilde{\mathbf{U}}_m^\transpose \widetilde{\bm{\nu}}_m \\
	& \le \frac{1}{\lambda} \left\| \widetilde{\mathbf{U}}_m^\transpose \widetilde{\bm{\nu}}_m \right\|^2_{2}\\
	& = \frac{1}{\lambda} \left\| \sum_{s=1}^{m} \vomega^\transpose \As^{H_l+1} \widetilde{\xs}_{l-1} \widetilde{\us}_l \right\|_2^2 \\
	& \le \frac{1}{\lambda} \left( \sum_{s=1}^{m} \| \vomega\|_2 \left\|\As^{H_l+1} \right\|_2 \|\widetilde{\xs}_{l-1}\|_2 \| \widetilde{\us}_l\|_2 \right)^2  \\
	& \le \frac{X^2 \Omega^2 U^2 \Phi(\As)^2}{\lambda} \cdot \left( \sum_{l=1}^m \rho(\As)^{H_l+1} \right)^2.
\end{align*}
We now bound the summations, exploiting the inequality $\rho(\As) \le \overline{\rho}$, holding by assumption:
\begin{align*}
	\sum_{l=1}^m \rho(\As)^{H_l+1} & = \sum_{l=1}^m \rho(\As)^{\Big\lfloor\frac{\log l}{\log \frac{1}{\overline{\rho}}} \Big\rfloor+1} \\
	& \le \sum_{l=1}^m \rho(\As)^{\frac{\log l}{\log \frac{1}{\overline{\rho}}}} \\
	& = \sum_{l=1}^m \exp\left( -\frac{\log \frac{1}{\rho(\As)}}{\log \frac{1}{\overline{\rho}}}{\log l}  \right) \\
	& = \sum_{l=1}^m \frac{1}{l} \le \log(m+1) +1  \le \log(t+1)+1 = \log(e(t+1)),
\end{align*}
having exploited the fact that $m \le t$ and the bound with the integral to the harmonic sum.

Finally, we consider term (d). In this case, we apply Theorem 1 of~\citep{abbasi2011improved}, observing that the conditions are satisfied. To this end, we first need to determine the subgaussianity constant for the noise process $\widetilde{\xi}_l$. For every $l \in \dsb{m}$ and $\zeta \in \Reals$, and properly using the tower law of expectation:
\begin{align*}
		\E \left[ \exp \left( \zeta  \widetilde{\xi}_{l} \right) | \widetilde{\mathcal{F}}_{l-1} \right] & =  \E \left[ \exp \left( \zeta \eta_{t_l} + \zeta\sum_{s=1}^{H_m+1}\vomega^\transpose \As^{s-1} \epsilons_{t_l-s} \right) |\mathcal{F}_{t_l-1}  \right] \\
		& = \E \left[ \exp \left( \zeta \eta_{t_l} \right)| \mathcal{F}_{t_l-1}\right] \prod_{s=1}^{H_m+1}\E \left[ \E \left[ \exp \left( \zeta \vomega^\transpose \As^{s-1} \epsilons_{t_l-s} \right) | \mathcal{F}_{t_l-1-s}\right]| \mathcal{F}_{t_l-1} \right]\\
		& \le \exp \left( \frac{\zeta^2\sigma^2}{2} \right) \prod_{s=1}^{H_m+1}\E \left[  \exp \left( \frac{\zeta^2 \|\vomega^\transpose \As^{s-1}\|_2^2 \sigma^2}{2} \right) | \mathcal{F}_{t_l-1} \right]\\
		& \le \exp \left( \frac{\zeta^2\sigma^2}{2} \right) \prod_{s=1}^{H_m+1}  \exp \left( \frac{\zeta^2 \Omega^2 \Phi(\As)^2 \rho(\As)^{2(s-1)} \sigma^2}{2} \right)\\
		&\le  \exp \left( \frac{\sigma^2\zeta^2}{2} \left( 1 + \Omega^2 \Phi(\As)^2 \sum_{s=1}^{+\infty} \rho(\As)^{2(s-1)} \right) \right) \\
		& =  \exp \left( \frac{\sigma^2\zeta^2}{2} \left( 1 + \frac{\Omega^2 \Phi(\As)^2}{(1-\rho(\As)^2)}\right) \right).
	\end{align*}

Thus, simultaneously for all $m \in \dsb{M}$, with probability at least $1-\delta$, it holds that:
\begin{align*}
\text{(d)}^2 = \left\|\widetilde{\mathbf{U}}_m^\transpose \widetilde{\bm{\xi}}_m  \right\|_{\widetilde{\Vs}_m^{-1}}^2 \le 2  \sigma^2 \left( 1 + \frac{\Omega^2 \Phi(\As)^2}{(1-\rho(\As)^2)}\right)  \left( \log \left(\frac{1}{\delta} \right) + \frac{1}{2} \log \left( \frac{\det\left(\widetilde{\Vs}_m\right)}{  \lambda^d} \right) \right).
\end{align*}
\end{proof}

We now proceed at bounding the offline regret $R^{\text{off}}$ and, then, relating the offline regret $R^{\text{off}}$ with the online regret $R$, as defined in the main paper.

\begin{restatable}[Offline Regret Upper Bound]{thr}{regretOff}\label{thr:regretOff}
    Under Assumptions~\ref{ass:spectralNorm} and~\ref{ass:boundedness}, having selected $\beta_t$ as in Equation~\eqref{eq:beta}, for every $\delta \in (0,1)$, with probability at least $1-\delta$, \algnameshort suffers an offline regret $R^{\text{off}}$ bounded as:
    \begin{align*}
        R^{\text{off}}(\bm{\underline{\pi}}^{\text{\algnameshort}},T) \le\sqrt{8d T  {\beta}_{T-1}^2 \left(1 + \frac{\log T}{\log \frac{1}{\overline{\rho}}} \right) \log \left( 1+ \frac{T U^2}{d\lambda} \right)}.
    \end{align*}
    Moreover, by setting $\delta=1/T$, highlighting the dependencies on $T$, $\overline{\rho}$, $d$, and $\sigma$ only, the expected offline regret $\E R^{\text{off}}$ is bounded as:
    \begin{align*}
        \E R^{\text{off}}(\bm{\underline{\pi}}^{\text{\algnameshort}}, T) \le \mathcal{O} \left( \frac{d \sigma \sqrt{T} (\log T)^{\frac{3}{2}}}{1-\overline{\rho}} + \frac{\sqrt{d T} (\log T)^2}{(1-\overline{\rho})^{\frac{3}{2}}} \right).
    \end{align*}
\end{restatable}

\begin{proof}
For every epoch $m \in \dsb{M}$, let us define $\widetilde{\beta}_{m-1} = \beta_{t_{m-1}}$ and define the confidence set $\mathcal{C}_{m-1} = \{ \widetilde{\hs} \in \Reals^d : \| \widetilde{\hs} - \widetilde{\hs}_{m-1} \|_{\widetilde{\Vs}_{m-1}} \le \widetilde{\beta}_{m-1} \}$.
Let us start by considering the instantaneous offline regret $\widetilde{r}_m$ at epoch $m \in \dsb{M}$. Let $\us^* \in \argmax_{\us \in \Us}\, \inner{\hs}{\us}$ and let $\widetilde{\hs}_{m-1}^{\uparrow} \in \mathcal{C}_{m-1}$ such that $\text{UCB}_{t_{m-1}+1}(\widetilde{\us}_m) = \inner{\widetilde{\hs}_{m-1}^{\uparrow}}{\widetilde{\us}_m}$. Thus, with probability at least $1-\delta$, we have:
\begin{align}
		\widetilde{r}_m & = J^* - J(\widetilde{\us}_m) = \inner{\hs}{\us^*} - \inner{\hs}{\widetilde{\us}_m} \pm \inner{\widetilde{\hs}_{m-1}^\uparrow}{\widetilde{\us}_m} \notag\\
		& \le \inner{\widetilde{\hs}_{m-1}^{\uparrow} - \hs}{\widetilde{\us}_m}  \label{l:001}\\
		& \le \left\|\widetilde{\hs}_{m-1}^{\uparrow} - \hs \right\|_{\widetilde{\Vs}_{m-1}} \left\|\widetilde{\us}_m\right\|_{\widetilde{\Vs}_{m-1}^{-1}} \notag \\
		& \le \left( \left\|\widetilde{\hs}_{m-1}^{\uparrow} - \widetilde{\hs}_{m-1} \right\|_{\widetilde{\Vs}_{m-1}} + \left\|\widetilde{\hs}_{m-1} - \hs \right\|_{\widetilde{\Vs}_{m-1}}\right) \label{l:002} \left\|\widetilde{\us}_m\right\|_{\widetilde{\Vs}_{m-1}^{-1}} \\
		& \le 2 \widetilde{\beta}_{m-1} \left\|\widetilde{\us}_m\right\|_{\widetilde{\Vs}_{m-1}^{-1}}.\label{l:003}
	\end{align}
	where line~\eqref{l:001} follows from the optimism, line~\eqref{l:002} derives from triangle inequality, line~\eqref{l:003} is obtained by observing that $\hs \in \mathcal{C}_{m-1}$ with probability at least $1-\delta$, simultaneously for all $m \in \dsb{M}$, thanks to Theorem~\ref{thr:concentration}, having observed that $\widetilde{\beta}_{m-1}$ is larger than the right hand side of Theorem~\ref{thr:concentration}.
	
	We now move to the cumulative offline regret over the whole horizon $T$, by decomposing w.r.t. the epochs and recalling that we pay the same instantaneous regret within each epoch:
	\begin{align*}
		R^{\text{off}}(\text{\algnameshort}, T) = \sum_{m=1}^M (H_m + 1) \widetilde{r}_m  \le \sqrt{\sum_{m=1}^M (H_m+1)^2} \sqrt{ \sum_{m=1}^M \widetilde{r}_m^2 }.
	\end{align*}
	Concerning the first summation, we proceed as follows, recalling that $M \le T$ and $H_m \le H_M$ for all $m \in \dsb{M}$:
	\begin{align*}
	    \sum_{m=1}^M (H_m+1)^2 \le T (H_M+1) \le T \left(1 + \frac{\log T}{\log \frac{1}{\overline{\rho}}} \right).
	\end{align*}
	For the second summation, we follow the usual derivation for linear bandits, recalling that $ \widetilde{\beta}_{M-1} \ge \max\{1, \widetilde{\beta}_{m-1} \}$ for all $m \in \dsb{M}$ and that under Assumption~\ref{ass:boundedness} we have that $\widetilde{r}_m^2 \le 2$. In particular:
	\begin{align*}
	\widetilde{r}_m^2 \le \min \left\{ 2, 2 \widetilde{\beta}_{M-1} \left\| \widetilde{\us}_m \right\|_{\widetilde{\Vs}_{m-1}^{-1}}\right\} \le 2 \widetilde{\beta}_{M-1} \min\left\{1,  \left\| \widetilde{\us}_m \right\|_{\widetilde{\Vs}_{m-1}^{-1}} \right\}.
	\end{align*}
	Plugging this inequality into the second summation, we obtain:
	\begin{align*}
	    \sum_{m=1}^M \widetilde{r}_m^2 & \le 4 \widetilde{\beta}_{M-1}^2 \sum_{m=1}^M  \min\left\{ 1, \left\| \widetilde{\us}_m \right\|_{\widetilde{\Vs}_{m-1}^{-1}}^2 \right\}\\
	    & \le  8d  \widetilde{\beta}_{M-1}^2 \log \left( 1+ \frac{M U^2}{d\lambda} \right) \le 8d  {\beta}_{T-1}^2 \log \left( 1+ \frac{T U^2}{d\lambda} \right),
	\end{align*}
	where the last passage follows from the elliptic potential lemma~\citep[][Lemma 19.4]{lattimore2020bandit}. Putting all together, we obtain the inequality holding with probability at least $1-\delta$:
	\begin{align*}
	    R^{\text{off}}(\text{\algnameshort}, T) \le \sqrt{8d T  {\beta}_{T-1}^2 \left(1 + \frac{\log T}{\log \frac{1}{\overline{\rho}}} \right) \log \left( 1+ \frac{T U^2}{d\lambda} \right)},
	\end{align*}
	having observed that $\widetilde{\beta}_{M-1} \le \beta_{T-1}$
	We can also arrive at a problem-dependent regret bound, by setting $\Delta \coloneqq \inf_{\us \in \Us \inner{\hs}{\us} < \inner{\hs}{\us^*}} \,\inner{\hs}{\us^*-\us}$ (if it exists $>0$). Since the instantaneous regret is either $0$ or at least $\Delta$, we have:
	\begin{align*}
	    R^{\text{off}}(\text{\algnameshort},T) & \le \sum_{m=1}^M (H_m + 1) \frac{\widetilde{r}_m^2}{\Delta}  \\
	    & \le \frac{H_{M} + 1}{\Delta} 8d  \widetilde{\beta}_{M-1}^2 \log \left( 1+ \frac{M U^2}{d\lambda} \right) \\
	    & \le \frac{8d }{\Delta} \left(1 + \frac{\log T}{\log \frac{1}{\overline{\rho}}} \right) {\beta}_{T-1}^2 \log \left( 1+ \frac{T U^2}{d\lambda} \right).
	\end{align*}
	By setting $\delta = 1/T$, replacing the value of $\beta_{T-1}$, we obtain the offline regret in expectation, highlighting the dependence on $T$, $\overline{\rho}$, $d$, and $\sigma$ only:
	\begin{align*}
	    \E R^{\text{off}}(\text{\algnameshort},T) &\le \mathcal{O} \left( \frac{d \sigma \sqrt{T} (\log T)^{\frac{3}{2}}}{1-\overline{\rho}} + \frac{\sqrt{d T} (\log T)^2}{(1-\overline{\rho})^{\frac{3}{2}}} \right),
	\end{align*}
	where we used the fact that $\frac{1}{\log \frac{1}{\overline{\rho}}} \le \frac{1}{1-\overline{\rho}}$ and $\rho(\As) \le \overline{\rho}$.
\end{proof}

The following lemma relates the expected offline regret with the expected online regret.

\regretThr*

\begin{proof}
	The result is simply obtained by exploiting the offline regret bound of Theorem~\ref{thr:regretOff} and by upper bounding the expected regret using Lemma~\ref{lemma:offOnRel}.
\end{proof}

\section{Finite-Horizon Setting}\label{apx:finiteHorizon}
In this section, we compare the finite-horizon setting with the infinite-horizon one presented in the main paper. We shall show that under Assumption~\ref{ass:spectralNorm}, the two settings tend to coincide when the horizon is sufficiently large. Let us start by introducing the \emph{$H$--horizon expected average reward}, with $H \in \Nat$ being the optimization horizon:
\begin{align}\label{eq:eqDynFinite}
	J_H(\underline{\vpi}) \coloneqq  \E \left[\frac{1}{H} \sum_{t=1}^H y_t \right] \quad \text{where} \quad \begin{cases} \xs_{t+1} = \As \xs_t + \Bs \us_t + \epsilons_t\\
	y_t =  \inner{\vomega}{\xs_t} + \inner{\vtheta}{\us_t} + \eta_t\\
	\us_t = \vpi_t(H_{t-1})
	\end{cases}, \quad t \in [H],
\end{align}
where the expectation is taken \wrt the randomness of the state noise $\epsilons_t$ and reward noise $\eta_t$. We now show that the optimal policy for the finite-horizon setting is a non-stationary open-loop policy.

\begin{thr}[Optimal Policy for the $H$--Horizon Setting]
If $H \in \Nat$, an optimal policy $\underline{\vpi}^*_H = ({\vpi}^*_{H,t})_{t \in \dsb{H}}$ maximizing the $H$-horizon expected average reward $J(\underline{\vpi})$ as in Equation~\eqref{eq:eqDynFinite} is given by:
\begin{align*}
	\forall t \in \dsb{H},\quad \forall H_{t-1} \in \Hs_{t-1}: \qquad \vpi_{H,t}^*(H_{t-1}) = \us^*_{H,t} \quad \text{ where } \quad \us^*_{H,t} \in \argmax_{\us \in \Us}  \, \inner{\hs^{\dsb{0,H-t}}}{\us}.
\end{align*}
\end{thr}

\begin{proof}
We start by expressing for every $t \in \dsb{H}$ the reward $y_t$ as a function of the sequence of actions $\underline{\us} = (\us_1,\dots ,\us_{H})$ produced by a generic policy $\underline{\vpi}$. By exploiting Equation~\eqref{eq:markovDec} instanced with $H={t-1}$, we have:
\begin{align*}
	y_t = \sum_{s=0}^{t-1} \inner{\hs^{\{s\}}}{\us_{t-s}}  + \vomega^\transpose \As^{t-1} \xs_1 + \eta_t + \sum_{s=1}^{t-1} \vomega^\transpose \As^{s-1} \epsilons_{t-s}.
\end{align*}
By computing the expectation, using linearity, and recalling that the noises are zero-mean, we obtain:
\begin{align*}
	\E[y_t] = \sum_{s=0}^{t-1} \inner{\hs^{\{s\}}}{\E[\us_{t-s}]}   + \vomega^\transpose \As^{t-1} \E[\xs_1].
\end{align*}
By averaging over $t \in \dsb{H}$, we obtain the $H$-horizon expected average reward:
\begin{align}
	J_H(\underline{\vpi}) & = \frac{1}{H}\sum_{t=1}^H \E[y_t] \notag \\
	& = \frac{1}{H} \sum_{t=1}^H \sum_{s=0}^{t-1} \inner{\hs^{\{s\}}}{\E[\us_{t-s}]}  + \frac{1}{H} \sum_{t=1}^H \vomega^\transpose \As^{t-1} \E[\xs_1]  \notag 
	\\
	& = \frac{1}{H} \sum_{s=1}^H \left(\sum_{t=s}^H \hs^{\{t-s\}} \right)^\transpose \E[\us_{s}] +\frac{1}{H} \sum_{t=1}^H  \vomega^\transpose \As^{t-1} \E[\xs_1] \label{p:001}\\
	& = \frac{1}{H} \sum_{s=1}^H \inner{\hs^{\dsb{0,H-s}}}{ \E[\us_{s}]} +\frac{1}{H} \sum_{t=1}^H  \vomega^\transpose \As^{t-1} \E[\xs_1]. \label{p:002}
\end{align}
where line~\eqref{p:001} is obtained by renaming the indexes of the summations, and line~\eqref{p:002} comes from the definition of cumulative Markov parameter $\hs^{\dsb{0,H-s}}$. It is now simple to see, as no noise is present in the expression, that the performance $J_H(\underline{\vpi})$ is maximized by taking at each round $s \in \Nat$ an action $\us_s^* = \vpi_s^*(H_{s-1})$ such that whose expectation satisfies $\E[\us_{s}^*] =\argmax_{\E[\us_{s}]} \inner{\hs^{\dsb{0,H-s}}}{ \E[\us_{s}]}$. Clearly, we can take the deterministic action such that $\us_s^* = \E[\us_{s}^*]$.
\end{proof}

We now show that for sufficiently large $H$, the $H$-horizon expected average reward $J_H$ tends to coincide with the infinite-horizon  expected average reward.

\begin{prop}
Let $H \in \Nat$. Then, for every policy $\underline{\vpi}$ it holds that:
\begin{align*}
    \left| J_H(\underline{\vpi}) - J(\underline{\vpi}) \right| \le \frac{BU\Omega \Phi(\As)(1-\rho(\As)^H)}{H(1-\rho(\As))}.
\end{align*}
\end{prop}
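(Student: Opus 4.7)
The natural starting point is the closed-form representation of $J_H(\underline{\vpi})$ derived inside the proof of the Optimal Policy theorem, namely
\[
J_H(\underline{\vpi}) \;=\; \tfrac{1}{H}\sum_{s=1}^H \inner{\hs^{\dsb{0,H-s}}}{\E[\us_{s}]} \;+\; \tfrac{1}{H}\sum_{t=1}^H \vomega^\transpose \As^{t-1}\E[\xs_1].
\]
I would then split $\hs^{\dsb{0,H-s}} = \hs - \hs^{\srb{H-s+1,+\infty}}$, which gives
\[
J_H(\underline{\vpi}) = \underbrace{\tfrac{1}{H}\sum_{s=1}^H \inner{\hs}{\E[\us_s]}}_{\text{``steady-state'' average}} \;-\; \underbrace{\tfrac{1}{H}\sum_{s=1}^H \inner{\hs^{\srb{H-s+1,+\infty}}}{\E[\us_s]}}_{\text{(I): truncation tail}} \;+\; \underbrace{\tfrac{1}{H}\sum_{t=1}^H \vomega^\transpose \As^{t-1}\E[\xs_1]}_{\text{(II): transient from $\xs_1$}}.
\]
By the Optimal Policy proof, the first summand is exactly the quantity whose $\liminf$ equals $J(\underline{\vpi})$; hence the error $|J_H(\underline{\vpi})-J(\underline{\vpi})|$ is controlled by $|\text{(I)}|+|\text{(II)}|$, possibly after passing to the $\liminf$ on both sides.

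Next I would bound (I) and (II) via Assumptions \ref{ass:spectralNorm}-\ref{ass:boundedness}. Reusing inequality \eqref{p:-100} from the Optimal Policy proof,
\[
\bigl\|\hs^{\srb{H-s+1,+\infty}}\bigr\|_2 \le \tfrac{B\,\Omega\,\Phi(\As)\,\rho(\As)^{H-s}}{1-\rho(\As)},
\]
and since $\|\E[\us_s]\|_2 \le U$ by Jensen plus Assumption \ref{ass:boundedness}, a Cauchy--Schwarz step bounds (I) by a geometric sum
\[
\tfrac{BU\,\Omega\,\Phi(\As)}{H(1-\rho(\As))}\sum_{s=1}^H \rho(\As)^{H-s} \;=\; \tfrac{BU\,\Omega\,\Phi(\As)\,(1-\rho(\As)^H)}{H(1-\rho(\As))^2},
\]
which, together with the analogous geometric-series estimate $\|\As^{t-1}\|_2 \le \Phi(\As)\rho(\As)^{t-1}$ applied to (II) (absorbed into the same type of $(1-\rho(\As)^H)/(1-\rho(\As))$ factor using $\|\E[\xs_1]\|_2 \le X$), matches the stated right-hand side up to the choice of exponent on $(1-\rho(\As))$.

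\textbf{Main obstacle.} The only non-routine point is relating $J(\underline{\vpi}) = \liminf_H J_H(\underline{\vpi})$ to the finite-horizon quantity $\tfrac{1}{H}\sum_s \inner{\hs}{\E[\us_s]}$ for an \emph{arbitrary} (possibly non-stationary, non-convergent) policy. For constant policies this is immediate since $J(\us) = \inner{\hs}{\us}$. In the general case I would argue as in the Optimal Policy proof: the two sequences differ by terms of the form (I)+(II), both of which vanish uniformly as $H\to\infty$, so the squeeze theorem yields $J(\underline{\vpi}) = \liminf_H \tfrac{1}{H}\sum_s \inner{\hs}{\E[\us_s]}$, and the same tail bounds then transfer to $|J_H - J|$. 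The remaining work is purely arithmetic, bounding geometric partial sums by $\tfrac{1-\rho(\As)^H}{1-\rho(\As)}$ and collecting constants.
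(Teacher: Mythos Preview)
Your approach is essentially the paper's: it also starts from the representation in Equation~\eqref{p:002}, performs the same split $\hs^{\dsb{0,H-s}} = \hs - \hs^{\srb{H-s+1,+\infty}}$, and bounds the tail (your term (I)) by the identical geometric-sum argument. Two small points of comparison. First, the paper's proof silently drops the initial-state transient (your term (II)), while you correctly keep it; and in the final line the paper writes $\frac{BU\Omega\Phi(\As)}{H}\sum_{s}\rho(\As)^{H-s}$, i.e.\ it omits the factor $1/(1-\rho(\As))$ coming from \eqref{p:-100}, which is why its denominator has $(1-\rho(\As))$ rather than the $(1-\rho(\As))^2$ you obtain---so the exponent discrepancy you flagged is a slip in the paper, not in your derivation. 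Second, the ``main obstacle'' you identify is genuine and the paper does not resolve it either: its step
\[
J_H(\underline{\vpi}) - J_{H'}(\underline{\vpi}) \;=\; -\tfrac{1}{H}\sum_{s=1}^H \inner{\hs^{\srb{H-s+1,+\infty}}}{\E[\us_s]} \;+\; \tfrac{1}{H'}\sum_{s=1}^{H'} \inner{\hs^{\srb{H'-s+1,+\infty}}}{\E[\us_s]}
\]
tacitly cancels the Ces\`aro averages $\tfrac{1}{H}\sum_s\inner{\hs}{\E[\us_s]}$ against $\tfrac{1}{H'}\sum_s\inner{\hs}{\E[\us_s]}$, which is only valid when these averages coincide (e.g.\ for constant policies). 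For a genuinely non-stationary policy whose Ces\`aro means do not converge, neither your squeeze argument nor the paper's cancellation yields the claimed uniform-in-$H$ bound; the statement should be read with this caveat in mind.
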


\begin{proof}
    Consider two horizons $H < H' \in \Nat$, and let $(\us_1, \us_2, \dots)$ be the sequence of actions played by policy $\underline{\vpi}$. Using Equation~\eqref{p:002}, we have:
    \begin{align}
         J_H(\underline{\vpi}) - J_{H'}(\underline{\vpi}) & = \frac{1}{H} \sum_{s=1}^H \inner{\hs^{\dsb{0,H-s}}}{ \E[\us_{s}]} - \frac{1}{H'} \sum_{s=1}^{H'} \inner{\hs^{\dsb{0,H'-s}}}{ \E[\us_{s}]} \\
         & = \frac{1}{H} \sum_{s=1}^H \inner{\hs^{\dsb{0,H-s}} - \hs}{ \E[\us_{s}]} - \frac{1}{H'} \sum_{s=1}^{H'} \inner{\hs^{\dsb{0,H'-s}} - \hs}{ \E[\us_{s}]} \\
         & = - \frac{1}{H} \sum_{s=1}^H \inner{\hs^{\srb{H-s+1, +\infty}}}{ \E[\us_{s}]} + \frac{1}{H'} \sum_{s=1}^{H'} \inner{\hs^{\srb{H'-s+1,+\infty}} }{ \E[\us_{s}]}. 
    \end{align}
    As shown in Appendix~\ref{proof:0001}, we have that the second addendum vanishes as $H'$ approaches $+\infty$:
    \begin{align*}
         \frac{1}{H'} \left|\sum_{s=1}^{H'} \inner{\hs^{\srb{H'-s+1,+\infty}} }{ \E[\us_{s}]}\right| \rightarrow 0 \qquad \quad \text{when}\qquad \quad H' \rightarrow +\infty.
    \end{align*}
    Concerning the first addendum, we have:
    \begin{align*}
          \frac{1}{H} \left|\sum_{s=1}^H \inner{\hs^{\srb{H-s+1, +\infty}}}{ \E[\us_{s}]}\right| & \le \frac{U}{H} \sum_{s=1}^H \left\| \hs^{\srb{H-s+1, +\infty}}\right\|_2\\
          & \le \frac{BU\Omega \Phi(\As)}{H} \sum_{s=1}^H \rho(\As)^{H-s} \\
          & =  \frac{BU\Omega \Phi(\As)(1-\rho(\As)^H)}{H(1-\rho(\As))}.
    \end{align*}
\end{proof}

\section{System Identification}
\label{apx:systemidenfication}

This section presents a solution to identify matrices $\Abf$, $\Bbf$, $\Cbf$, and $\Dbf$ characterizing an LTI system starting from a single trajectory. We adopt a variant of the Ho-Kalman~\citep{ho1966effective} algorithm. We start from the identification method proposed by~\citet[][Section 3]{lale2020logarithmic}, where authors consider a system of the type (strictly proper):
\begin{align}
    \xs_{t+1} &= \Abf \xs_t + \Bbf \us_t + \epsilons_t, \label{eq:systemtilde_y}\\
    \widetilde{\ys}_{t} &= \Cbf \xs_t + \zs_t. \nonumber
\end{align}
Our setting can be seen as (not strictly proper):
\begin{align}
    \xs_{t+1} &= \Abf \xs_t + \Bbf \us_t + \epsilons_t, \label{eq:system_y}\\
    \ys_{t} &= \Cbf \xs_t + \Dbf \us_t + \zs_t, \nonumber
\end{align}
with $\xs_t, \epsilons_t \in \mathbb{R}^n$, $\us_t \in \mathbb{R}^p$, and $\ys_t, \zs_t \in \mathbb{R}^m$. The noise over state transition model $\epsilons_t$ and output $\zs_t$ are $\sigma^2$-subgaussian random variables.
We consider in this part the standard control problem notation adopted for LTI systems. The mapping to our problem is straightforward by considering $\Cbf = \vomega^\transpose$ and $\Dbf = \vtheta^\transpose$.
In predictive form, the system described in Equation~\eqref{eq:systemtilde_y} is:
\begin{align*}
    \widehat{\xs}_{t+1} &= \bar{\Abf} \widehat{\xs}_t + \Bbf \us_t + \Fbf \widetilde{\ys}_t, \\
    \widetilde{\ys}_{t} &= \Cbf \widehat{\xs}_t + \es_t,
\end{align*}
where:
\begin{align*}
    \bar{\Abf} &= \Abf - \Fbf \Cbf , \\
    \Fbf &= \Abf \Sigmabf \Cbf^\transpose (\Cbf \Sigmabf \Cbf^\transpose + \sigma^2 \Ibf )^{-1},
\end{align*}
and $\Sigmabf$ is the solution to the following DARE (Discrete Algebraic Riccati Equation):
\begin{equation*}
    \Sigmabf = \Abf \Sigmabf \Abf^\transpose - \Abf \Sigmabf \Cbf^\transpose (\Cbf \Sigmabf \Cbf^\transpose + \sigma^2 \Ibf )^{-1} \Cbf \Sigmabf \Abf^\transpose + \sigma^2 \Ibf .
\end{equation*}
In order to identify this LTI system, we want to detect a matrix $\widetilde{\mathcal{G}}_{y}$:
\begin{equation}
    \widetilde{\mathcal{G}}_{y} = \left[ \Cbf \Fbf \ \ \ \Cbf \bar \Abf \Fbf \ \ \ \dots \ \ \ \Cbf \bar \Abf^{H-1} \Fbf \ \ \ \Cbf \Bbf \ \ \ \Cbf \bar \Abf \Bbf \ \ \ \dots \Cbf \bar \Abf^{H-1} \Bbf \right].
\end{equation}
To identify through least squares method matrix $\widetilde{\mathcal{G}}_{y}$, we construct for each $t$, a vector $\widetilde{\phi}_t$:
\begin{equation}
    \widetilde{\phi}_t = \left[ \ys_{t-1}^\transpose \ \ \ \dots \ \ \ \ys_{t-H}^\transpose \ \ \ \us_{t-1}^\transpose \ \ \ \dots \ \ \ \us_{t-H}^\transpose \right]^\transpose \in \mathbb{R}^{(m+p)H}.
\end{equation}
The system output $\widetilde{\ys}_t$ can be rewritten as:
\begin{equation*}
    \widetilde{\ys}_t = \widetilde{\mathcal{G}}_{y} \widetilde{\phi}_t + \es_t + \Cbf \Abf^H \xs_{t-H}.
\end{equation*}

The output of the system under analysis (Equation~\ref{eq:system_y}) is:
\begin{align*}
\ys(t) = \widetilde{\ys}_{t} + \Dbf \us_t = \widetilde{\mathcal{G}}_{y} \widetilde{\phi}_t + \Dbf \us_t + \es_t + \Cbf \Abf^H \xs_{t-H}
\end{align*}
We can incorporate the contribution of $\Dbf \us_t$ in $\widetilde{\mathcal{G}}_y$ obtaining $\mathcal{G}_y$:
\begin{equation*}
    \mathcal{G}_{y} = \left[ \Cbf \Fbf \ \ \ \Cbf \bar \Abf \Fbf \ \ \ \dots \ \ \ \Cbf \bar \Abf^{H-1} \Fbf \ \ \ \Dbf \ \ \ \Cbf \Bbf \ \ \ \Cbf \bar \Abf \Bbf \ \ \ \dots \Cbf \bar \Abf^{H-1} \Bbf \right].
\end{equation*}
The related vector $\phi_t$ is:
\begin{equation}
    \phi_t = \left[ \ys_{t-1}^\transpose \ \ \ \dots \ \ \ \ys_{t-H}^\transpose \ \ \ \us_t^\transpose \ \ \ \us_{t-1}^\transpose \ \ \ \dots \ \ \ \us_{t-H}^\transpose \right]^\transpose \in \mathbb{R}^{(m+p)H + p}.
\end{equation}

The best value of $\mathcal{G}_{y}$ can be found through regularized least squares as in~\citet[][Equation 10]{lale2020logarithmic}:
\begin{equation}
    \widehat{\mathcal{G}}_y = \argmin_{X} \lambda \| \mathbf{X} \|^2_F + \sum_{\tau = t - H}^{t} \| \ys_\tau - \mathbf{X} \phi_\tau \|^2_2,
\end{equation}
where $\| \cdot \|_F$ represents the Frobenius norm. 

The matrix $\Dbf$ can be directly retrieved from $\widehat{\mathcal{G}}_y$. In order to get matrices $\Abf$, $\Bbf$, and $\Cbf$, we remove the values related to $\Dbf$ from $\widehat{\mathcal{G}}_y$ and we retrieve $\widetilde{\mathcal{G}}_y$. From now on, we refer to the algorithm proposed in~\citet[][Appendix B]{lale2020logarithmic}.

\section{Integration on Numerical Simulations}
\label{apx:exper}

This section is divided in three parts. First, in Section~\ref{apx:notes_on_baselines}, we provide additional information about the baselines, their hyperparameters and the optimistic bounds. Second, in Section~\ref{apx:notes_on_realworldexp}, we provide all the matrices and vectors generalized to run the real-world experiment. Third, in Section~\ref{apx:additional_exp}, we provide further results for the simulations presented in Section~\ref{subsec:exp_synt}.

\subsection{Additional Notes on the Baselines}
\label{apx:notes_on_baselines}
As mentioned in Section~\ref{sec:numericalsimulations}, the chosen baselines are \linucb~\citep{abbasi2011improved}, \dlinucb~\citep{RussacVC19}, \artwo~\citep{chen2021dynamic}, \expthree~\citep{auer1995gambling}, \batchexpthree~\citep{dekel2012online,auer1995gambling} and the \manualexpert (the latter available only in the case of real-world data). All the hyperparameters, whenever possible, are set as prescribed in the original papers.
The bounds used for the exploration are adjusted in order to be able to fairly compete in this setting, and are considered as follows:
\begin{align*}
	& \beta_t^\texttt{Lin-UCB} \coloneqq \overline{c}_2 \sqrt{\lambda} + \sqrt{2 \overline{\sigma}^2 \left( \log \left(\frac{1}{\delta} \right) + \frac{d}{2} \log \left(1 + \frac{t U^2}{d \lambda} \right) \right)}, \\
	& \beta_t^\texttt{D-Lin-UCB} \coloneqq \overline{c}_2 \sqrt{\lambda} + \sqrt{2 \overline{\sigma}^2 \left( \log \left(\frac{1}{\delta} \right) + \frac{d}{2} \log \left(1 + \frac{t U^2}{d \lambda} \left( \frac{1- \gamma^{2t}}{1 - \gamma^2} \right) \right) \right)},
\end{align*}
where $\overline{c}_2$ and $\overline{\sigma}^2$ are as prescribed in Section~\ref{sec:regret}, and the hyperparameter $\gamma$ of \dlinucb is tuned.

For \artwo, the hyperparameter $\alpha$, describing the correlation over time is considered equal to $\rho(\As)$.

In the case of \expthree, the rewards are rescaled in order to make them range in $[0,1]$ with high probability, as follows: 
\begin{align*}
	\overline{r}_t = \frac{r_t + 2 \xi}{4 \xi}, \qquad \text{ where } \qquad \xi = \left( \Theta + \frac{ \Omega B}{1 - \rho (\As)} \right) U.
\end{align*}
Furthermore, in the case of \batchexpthree, the batch dimension $k$ is considered as:
\begin{align*}
	k = \left\lceil \frac{\log{M}}{\log{1/\overline{\rho}}} \right\rceil ,
\end{align*}
where $M$ is the one defined in Algorithm~\ref{alg:alg}~(line~\ref{line:define_M}). This batch size $k$ ensures that, at each time $t$, the contribution of actions $\us_s$ is negligible, with $s \in \dsb{t-k-1}$. The rewards collected in the same batch are averaged and transformed as in \expthree.

\subsection{Further Information on the Real-world Setting}
\label{apx:notes_on_realworldexp}
The real-world setting is generalized through a dataset containing real data related to the budgets invested in each advertising platform (\ie the $\us_t$) and the overall generated conversions (\ie the $y_t$) collected from three of the most important advertising platforms of the web (\texttt{Facebook}, \texttt{Google}, and \texttt{Bing}), related to a large number of campaigns for a value of more than $5$ Million USD over $2$ years. Starting from such data, we generalized the best model by means of a specifically designed variant of the Ho-Kalman algorithm~\citep{ho1966effective}, as described in Appendix~\ref{apx:systemidenfication}. We used the matrices estimated with Ho-Kalman to build up a simulator. The resulting system has $\rho (\As) = 0.67$, and is characterized as follows:
\begin{align*}
    \As = 
	\begin{pmatrix}
		0.38 & 0.33 & 0.6 \\
        0.07 & 0.76 & -0.54 \\
        0.18 & 0.34 & 0.05 
    \end{pmatrix} ,
    \qquad
    \Bs = 
	\begin{pmatrix}
            0.14 & 0.34 & -0.05 \\
            -0.17 & 0.03 & -0.01 \\
            0.04 & -0.09 & 0.17
    \end{pmatrix} ,
	\qquad
    \boldsymbol{\omega} = 
	\begin{pmatrix}
        -0.61 \\
		-0.04 \\
		-0.13 
    \end{pmatrix} ,
    \qquad
    \boldsymbol{\theta} = 
	\begin{pmatrix}
        0.13 \\
		0.41 \\
		0.02
    \end{pmatrix} .
\end{align*}

\subsection{Additional Numerical Simulations}
\label{apx:additional_exp}
These additional results are obtained in the setting presented in Section~\ref{subsec:exp_synt}. However, here, we want to analyze the behavior of \algnameshort and the other bandit baselines at different magnitudes of noise in both the state transition model and the output. The noise in this simulation is a zero-mean Gaussian noise with $\sigma \in \{ 0.001, 0.01, 0.1 \}$.

\textbf{Results}~~Figure~\ref{fig:regret_wrt_sigma} shows the results of the experiment for the different values of $\sigma$. It is clearly visible how \algnameshort performs in almost the same way no matter the noise to which the system is subject, always leading to sub-linear regret. 
On the other hand, the cumulative regret of both \linucb and \dlinucb is different in every simulation we perform. Indeed, with a low level of noise (Figure~\ref{fig:regret_suppl_sigma0001}) reaches linear regret and does not converge, while for large values of noise, it converges very quickly (Figure~\ref{fig:regret_suppl_sigma01}). This is due to the nature of the confidence bound of linear bandits, which is not able to take into account such a complex scenario and leads to no guarantees in this setting. \expthree, \batchexpthree, and \artwo are not able to reach the optimum in this scenario, independently from the noise magnitude  $\sigma$, and provide large values of (linear) regret.

\begin{figure}[t!]
    \centering
    \subfloat[$\sigma=0.001$]{\resizebox{0.32\linewidth}{!}{\includegraphics[]{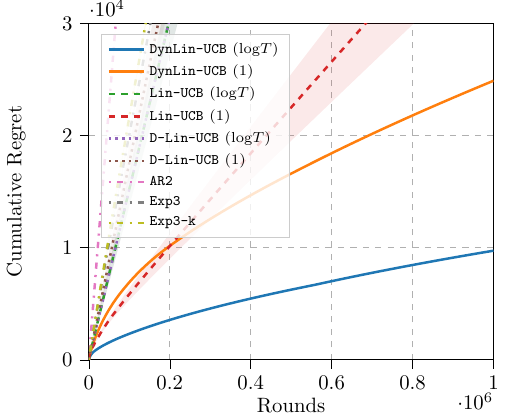}} \label{fig:regret_suppl_sigma0001}}
    \hfill
    \subfloat[$\sigma=0.01$]{\resizebox{0.32\linewidth}{!}{\includegraphics[]{content/img/final_sigma001.pdf}} \label{fig:regret_suppl_sigma001}}
    \hfill
    \subfloat[$\sigma=0.1$]{\resizebox{0.32\linewidth}{!}{\includegraphics[]{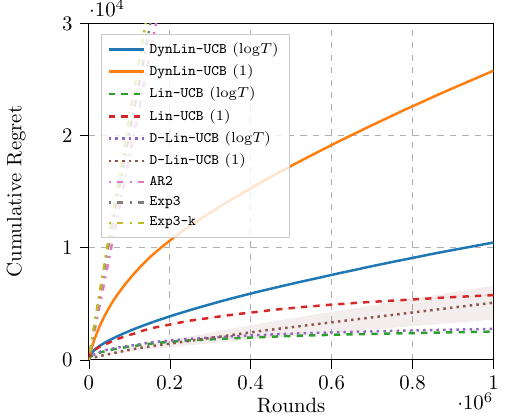}} \label{fig:regret_suppl_sigma01}}
    \caption{Performance of \algnameshort, \linucb, \dlinucb, \artwo, \expthree and \batchexpthree at different values of $\sigma$. (50 runs, mean $\pm$ std)}
    \label{fig:regret_wrt_sigma}
\end{figure}

\subsection{Computational Time}
\label{apx:computational_time}
The code used for the results provided in this section has been run on an Intel(R) I5 8259U @ 2.30GHz CPU with $8$ GB of LPDDR3 system memory. The operating system was macOS $12.2.1$, and the experiments have been run on \textit{Python} $3.9.7$. A single run of \algnameshort takes $110$ seconds to run.
It is worth noting that the time complexity of \algnameshort is upper-bounded by the one of \linucb.

\end{document}